\DeclareMathOperator*{\argmax}{arg\,max}
\newcommand{\Ex}{\mathbb{E}\hspace{0.05cm}}
\newcommand{\bs}{\boldsymbol}
\newcommand{\ba}{\left[\begin{array}}
	\newcommand{\ea}{\\\end{array} \right]}
\newcommand{\define}{\stackrel{\Delta}{=}}
\newtheorem{definition}{Definition}
\newtheorem{theorem}{Theorem}
\newtheorem{lemma}{Lemma}
\newtheorem{proposition}{Proposition}
\DeclarePairedDelimiter{\ceil}{\lceil}{\rceil}
\DeclarePairedDelimiter{\floor}{\lfloor}{\rfloor}
\begin{document}

%

%

\twocolumn[

\aistatstitle{Logical Team Q-learning: An approach towards factored policies in cooperative MARL}

\aistatsauthor{ Lucas Cassano \And  Ali H. Sayed }

\aistatsaddress{ \'Ecole Polytechnique F\'ed\'erale de Lausanne \And \'Ecole Polytechnique F\'ed\'erale de Lausanne } ]

\begin{abstract}
  We address the challenge of learning factored policies in cooperative MARL scenarios. In particular, we consider the situation in which a team of agents collaborates to optimize a common cost. The goal is to obtain factored policies that determine the individual behavior of each agent so that the resulting joint policy is optimal. The main contribution of this work is the introduction of \textit{Logical Team Q-learning (LTQL)}. LTQL does not rely on assumptions about the environment and hence is generally applicable to any collaborative MARL scenario. We derive LTQL as a stochastic approximation to a dynamic programming method we introduce in this work. We conclude the paper by providing experiments (both in the tabular and \textit{deep} settings) that illustrate the claims.
\end{abstract}

\section{INTRODUCTION}
\label{sec:intro}
Reinforcement Learning (RL) has had considerable success in many domains. In particular, Q-learning \cite{watkins1992q} and its deep learning extension DQN \cite{dqn} have shown great performance in challenging domains such as the Atari Learning Environment \cite{bellemare2013arcade}. At the core of DQN lie two important features: the ability to use expressive function approximators (in particular, neural networks (NN)) that allow it to estimate complex $Q$-functions; and the ability to learn off-policy and use replay buffers \cite{lin1992self}, which allows DQN to be very sample efficient. Traditional RL focuses on the interaction between one agent and an environment. However, in many cases of interest, a multiplicity of agents will need to interact with a unique environment and with each other. This is the object of study of Multi-agent RL (MARL), which goes back to the early work of \cite{tan1993multi} and has seen renewed interest of late (for an updated survey see \cite{zhang2019multi}). In this paper we consider the particular case of cooperative MARL in which the agents form a \textit{team} and have a shared unique goal. We are interested in tasks where collaboration is fundamental and a high degree of coordination is necessary to achieve good performance. In particular, we consider two scenarios.

In the first scenario, the global state and all actions are visible to all agents. One example of this situation could be a team of robots that collaborate to move a big and heavy object. It is well known that in this scenario the team can be regarded as one single agent where the aggregate action consists of the joint actions by all agents \cite{littman2001value}. The fundamental drawback of this approach is that the joint action space grows exponentially in the number of agents and the problem quickly becomes intractable \cite{kok2004sparse,guestrin2002coordinated}. One well-known and popular approach to solve these issues, is to consider each agent as an independent learner (IL) \cite{tan1993multi}. However, this approach has a number of problems. First, from the point of view of each IL, the environment is non-stationary (due to the changing policies of the other agents), which jeopardizes convergence. And second, replay buffers cannot be used due to the changing nature of the environment and therefore even in cases where this approach might work, the data efficiency of the algorithm is negatively affected. Ideally, it is desirable to derive an algorithm with the following features: i) it learns individual policies (and is therefore scalable), ii) local actions chosen greedily with respect to these individual policies result in an optimal team action, iii) can be combined with NN's, iv) works off-policy and can leverage replay buffers (for data efficiency), v) and enjoys theoretical guarantees to team optimal policies at least in the dynamic programming scenario. Indeed, the main contribution of this work is the introduction of \textit{Logical Team Q-learning (LTQL)}, an algorithm that has all these properties. We start in the dynamic programing setting and derive equations that characterize the desired solution. We use these equations to define the \textit{Factored Team Optimality Bellman Operator} and provide a theorem that characterizes the convergence properties of this operator. A stochastic approximation of the dynamic programming setting is used to obtain the tabular and deep versions of our algorithm. For the single agent setting, these steps reduce to: the Bellman optimality equation, the Bellman optimality operator and Q-learning (in its tabular form and DQN).


In the second scenario, we consider the centralized training and decentralized execution paradigm under partial observability. Under this scheme, training is done in a centralized manner and therefore we assume global information to be available during training. During execution, agents only have access to their own  observations. Therefore, even though during training global information is available, the learned policies must only rely on local observations. An example of this case would be a soccer team that during training can rely on a centralized server where data is aggregated but has to play games in a fully decentralized manner without the aid of such server.

\subsection{Relation to Prior Work}
\label{sec:prior_work}
Some of the earliest works on MARL are \cite{tan1993multi,claus1998dynamics}. \cite{tan1993multi} studied \textit{Independent Q-learning (IQL)} and identified that IQL learners in a MARL setting may fail to converge due to the non-stationarity of the perceived environment. \cite{claus1998dynamics} compared the performance of IQL and \textit{joint action learners (JAL)} where all agents learn the $Q$-values for all the joint actions, and identified the problem of coordination during decentralized execution when multiple optimal policies are available. \cite{littman2001value} later provided a proof of convergence for JALs. Recently, \cite{tampuu2017multiagent} did an experimental study of ILs using DQNS in the Atari game Pong. All these mentioned approaches cannot use experience replay due to the non-stationarity of the preceived environment. Following \textit{Hyper Q-learning} \cite{tesauro2004extending}, \cite{foerster2017stabilising} addressed this issue to some extent using \textit{fingerprints} as proxys to model other agents' strategies.

\cite{lauer2000riedmiller} introduced \textit{Distributed Q-learning (DistQ)}, which in the tabular setting has guaranteed convergence to an optimal policy for deterministic MDPs. However, this algorithm performs very poorly in stochastic scenarios and becomes divergent when combined with function approximation. Later \textit{Hysteretic Q-learning (HystQ)} was introduced in \cite{matignon2007hysteretic} to improve these two limitations. HystQ is based on a heuristic and can be thought of as a generalization of DistQ. These works also consider the scenario where agents cannot perceive the actions of other agents. They are related to LTQL (from this work) in that they can be considered approximations to our algorithm in the scenario where agents do not have information about other agents' actions. Recently \cite{omidshafiei2017deep} introduced \textit{Dec-HDRQNs} for multi-task MARL, which combines HystQ with Recurrent NNs and experience replay (which they recognize is important to achieve high sample efficiency) through the use of \textit{Concurrent Experience Replay Trajectories}.

\cite{wang2003reinforcement} introduced \textit{OAB}, the first algorithm that converges to an optimal Nash equilibrium with probability one in any team Markov game. OAB considers the team scenario where agents observe the full state and joint actions. The main disadvantage of this algorithm is that it requires estimation of the transition kernel and rewards for the joint action state space and also relies on keeping count of state-action visitation, which makes it impractical for MDPs of even moderate size and cannot be combined with function approximators. 

\cite{guestrin2002multiagent,guestrin2002coordinated,kok2004sparse} introduced the idea of factoring the joint $Q$-function to handle the scalability issue. These papers have the disadvantage that they require coordination graphs that specify how agents affect each other (the graphs require significant domain knowledge). The main shortcoming of these papers is the factoring model they use, in particular they model the optimal $Q$-function (which depends on the joint actions) as a sum of $K$ local $Q$-functions (where $K$ is the number of agents, and each $Q$-function considers only the action of its corresponding agent). The main issue with this factorization model is that the optimal $Q$-function cannot always be factored in this way, in fact, the tasks for which this model does not hold are typically the ones that require a high degree of coordination, which happen to be the tasks where one is most interested in applying specific MARL approaches as opposed to ILs. The approach we introduce in this paper also considers learning factored $Q$-functions. However, the fundamental difference is that the factored relations we estimate always exist and the joint action that results from maximizing these individual $Q$-functions is optimal. \textit{VDN} \cite{sunehag2018value} and \textit{Qmix} \cite{rashid2018qmix} are two recent \textit{deep} methods that also factorize the optimal $Q$-function assuming additivity and monotonicity, respectively. This factoring is their main limitation since many MARL problems of interest do not satisfy any of these two assumptions. Indeed, \cite{son2019qtran} showed that these methods are unable to solve a simple matrix game. Furthermore, the individual policies cannot be used for prediction, since the individual $Q$ values are not estimates of the return. To improve on the representation limitation due to the factoring assumption, \cite{son2019qtran} introduced \textit{QTRAN} which factors the $Q$-function in a more general manner and therefore allows for a wider applicability. The main issue with QTRAN is that although it can approximate a wider class of $Q$-functions than VDN and Qmix, the algorithm resorts to other approximations, which degrade its performance in complex environments (see \cite{rashid2020monotonic}).

Recently, actor-critic strategies have been explored. The algorithm introduced in \cite{zhang2018fully} has the disadvantage that it performs poor credit assignment and as a consequence can easily converge to highly suboptimal strategies (see \cite{cassano2019team}). \cite{gupta2017cooperative} introduces policy gradient schemes that also have the credit assignment issue. The algorithm presented by \cite{foerster2018counterfactual} addresses this issue, but does so by learning the team's \textit{joint} $q$-function and hence this approach does not address the exponential scalability issue. These methods have the added inconvenience that they are on-policy and hence do not enjoy the data efficiency that off-policy methods can achieve.

\section{PROBLEM FORMULATION}
\label{sec:problem}
We consider a situation where multiple agents form a team and interact with an environment and with each other. We model this interaction as a \textit{decentralized partially observable Markov decision process} (Dec-POMDP)\cite{oliehoek2016concise}, which is defined by the tuple ($\mathcal{S}$,$\mathcal{K}$,$o^k$,$\mathcal{A}^k$,$\mathcal{P}$,$r$), where, $\mathcal{S}$ is a set of global states shared by all agents; $\mathcal{K}$ is the set of $K$ of agents; $o^k:\mathcal{S}\rightarrow\mathcal{O}^k$ is the observation function for agent $k$, whose output lies in some set of observations $\mathcal{O}^k$; $\mathcal{A}^k$ is the set of actions available to agent $k$; $\mathcal{P}(s'|s,a^1,\cdots,a^K)$ specifies the probability of transitioning to state $s'\in\mathcal{S}$ from state $s\in\mathcal{S}$ having taken joint actions $a^k\in\mathcal{A}^{k}$; and $r:\mathcal{S}\times\mathcal{A}^{1}\times\cdots\times\mathcal{A}^{K}\times\mathcal{S}\rightarrow\mathbb{R}$ is a global reward function. Specifically, $r(s,a^1,\cdots,a^K,s')$ is the reward when the team transitions to state $s'\in\mathcal{S}$ from state $s\in\mathcal{S}$ having taken actions $a^1,\cdots,a^K$. The reward $r(s,a^1,\cdots,a^K,s')$ can be a random variable following some distribution $f_{s,a^1,\cdots,a^K,s'}(r)$. We clarify that from now on we will refer to the collection of all individual actions as the team's action, denoted as $\bar{a}$. Furthermore we will use $a^{-k}$ to refer to the actions of all agents except for action $a^k$. Therefore we can write $\mathcal{P}(s'|s,a^1,\cdots,a^K)=\mathcal{P}(s'|s,a^k,a^{-k})=\mathcal{P}(s'|s,\bar{a})$. The goal of the team is to maximize the team's return:
\begin{align}\label{eq:opt_problem}
J(\pi)&=\sum_{t=0}^{\infty}\gamma^t\Ex_{\pi,\mathcal{P},d,f}\left[\bs{r}(\bs{s_{t}},\bs{\bar{a}_t},\bs{s_{t+1}})\right]
\end{align}
where $s_{t}$ and $\bar{a}_t$ are the state and actions at time $t$, respectively, $\pi(\bar{a}|s)$ is the team's policy, $d$ is the distribution of initial states, and $\gamma\in[0,1)$ is the discount factor. We clarify that we use bold font to denote random variables and the notation $\Ex_{\ell}$ makes explicit that the expectation is taken with respect to distribution $\ell$. From now on, we will only make the distributions explicit in cases where doing so makes the equations more clear. Accordingly, the team's optimal state-action value function ($q^\dagger$) and optimal policy ($\pi^\dagger$) are given by \cite{sutton1998reinforcement}:
\begin{subequations}\label{eq:opt_single_policy}
	\begin{align}
	\pi^\dagger\hspace{-0.3mm}(\bar{a}|s)&\hspace{-0.6mm}=\hspace{-0.6mm}\argmax_{\pi(\bar{a}|s)}\Ex_{\pi,\mathcal{P}}\hspace{-0.5mm}\big[r(s,\bs{\bar{a}})\hspace{-0.5mm}+\hspace{-0.5mm}\gamma\max_{\bar{a}^{\prime}}q^\dagger(\bs{s'}\hspace{-0.4mm},\bar{a}^{\prime})\big]\label{eq:opt_policy}\\
	q^\dagger\hspace{-0.3mm}(s,\bar{a})&\hspace{-0.5mm}=\hspace{-0.5mm}\Ex_{\mathcal{P}}\big[r(s,\bar{a})+\gamma\max_{\bar{a}^{\prime}}q^\dagger(\bs{s'},\bar{a}^{\prime})\big]\label{eq:bellman_opt}
	\end{align}
\end{subequations}
where $r(s,\bar{a})=\Ex_{\mathcal{P},f}\bs{r}(s,\bar{a},\bs{s'})$. As already mentioned, a team problem of this form can be addressed with any single-agent algorithm. The fundamental inconvenience with this approach is that the joint action space scales exponentially with the number of agents, more specifically $|\bar{\mathcal{A}}|=\prod_{k=1}^K|\mathcal{A}^{k}|$ (where $\bar{\mathcal{A}}$ is the joint action space). Another problem with this approach is that the learned $Q$-function cannot be executed in a decentralized manner using the agents' observations. For these reasons, in the next sections we concern ourselves with learning factored quantities.

\section{DYNAMIC PROGRAMMING}\label{section:dynamic_programming}
Similarly to the way that relation \eqref{eq:bellman_opt} is used to derive $Q$-learning in the single agent setting, the goal of this section is to derive relations in the dynamic programming setting from which we can derive a cooperative MARL algorithm. The following two propositions take the first steps in this direction.
\begin{proposition}\label{proposition:factored_optimal_q}
	For each deterministic team optimal policy, there exist $K$ factored functions $q^{k,\star}:\mathcal{S}\times\mathcal{A}^k\rightarrow\mathbb{R}$ such that:
	\begin{subequations}\label{eq:conditions_t_factored_q}
		\allowdisplaybreaks
		\begin{align}
		&\max_{\bar{a}}q^\dagger(s,\bar{a})\hspace{-0.5mm}=\hspace{-0.5mm}\max_{a^k}q^{k,\star}(s,a^k),\hspace{5mm}\forall 1\leq k\leq K\label{eq:factored_q_1}\\
		&\max_{\bar{a}}q^\dagger(s,\bar{a})\hspace{-0.5mm}=\hspace{-0.5mm}q^\dagger\Bigl(s,\argmax_{a^1}q^{1,\star}(s,a^1),\cdots\nonumber\\
		&\hspace{35mm},\argmax_{a^K}q^{K,\star}(s,a^K)\Bigr)\label{eq:factored_q_2}\\
		&q^{k,\star}(s,a^k)\hspace{-0.5mm}=\hspace{-0.5mm}\mathcal{B}_{E}q^{k,\star}(s,a^k)\label{eq:optimal_fixed_point_1}\\
		&\mathcal{B}_{E}q^{k}(s,a^k)=r(s,a^k,a^{-k})\nonumber\\
		&\hspace{5mm}+\hspace{-0.5mm}\gamma\Ex\max_{a^{\prime}}q^{k}(\bs{s'},a^{\prime})\big|_{a^{n}=\argmax\limits_{a^n}q^{n}(s,a^n)\hspace{1mm}\forall n\neq k}\label{eq:Be}
		\end{align}
	\end{subequations}
where operator $\mathcal{B}_{E}$ is defined such that if there are multiple arguments that maximize $\argmax_{a^n}q^{n}(s,a^n)$ the actions that jointly maximize \eqref{eq:Be} are chosen.
\end{proposition}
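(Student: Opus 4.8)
The plan is to give an explicit construction of the factored functions $q^{k,\star}$ from a fixed deterministic team optimal policy and then verify each of the four claimed relations directly. Let $\bar{a}^\star(s)=(a^{1,\star}(s),\dots,a^{K,\star}(s))$ denote the joint action selected at state $s$ by the chosen deterministic optimal policy, so that $q^\dagger(s,\bar{a}^\star(s))=\max_{\bar a}q^\dagger(s,\bar a)$. Writing $a^{-k,\star}(s)$ for the restriction of $\bar a^\star(s)$ to the agents other than $k$, I would define
\begin{equation}
q^{k,\star}(s,a^k)\define q^\dagger\big(s,a^k,a^{-k,\star}(s)\big),
\end{equation}
that is, the team value obtained when agent $k$ plays $a^k$ while every other agent plays its optimal action. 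This construction sidesteps the usual coordination difficulty by anchoring all $K$ factored functions to one common optimal joint action.

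The first verification is relation \eqref{eq:factored_q_1}. Since $\bar a^\star(s)$ maximizes $q^\dagger(s,\cdot)$, we have $q^{k,\star}(s,a^k)=q^\dagger(s,a^k,a^{-k,\star}(s))\le\max_{\bar a}q^\dagger(s,\bar a)$ for every $a^k$, with equality attained at $a^k=a^{k,\star}(s)$; hence $\max_{a^k}q^{k,\star}(s,a^k)=\max_{\bar a}q^\dagger(s,\bar a)$ and $a^{k,\star}(s)\in\argmax_{a^k}q^{k,\star}(s,a^k)$. Relation \eqref{eq:factored_q_2} then follows immediately: selecting $a^{k,\star}(s)$ in each agent's $\argmax$ (the purpose of the tie-breaking convention in the statement) reassembles the joint action $\bar a^\star(s)$, whose value is exactly $\max_{\bar a}q^\dagger(s,\bar a)$.

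It remains to verify the fixed-point relation \eqref{eq:optimal_fixed_point_1}. When $\mathcal{B}_E$ is applied to $q^{k,\star}$, the conditioning $a^n=\argmax_{a^n}q^{n,\star}(s,a^n)$ for $n\ne k$ evaluates, under the tie-breaking rule, to $a^{-k,\star}(s)$, while the inner term $\max_{a'}q^{k,\star}(\bs{s'},a')$ equals $\max_{\bar a'}q^\dagger(\bs{s'},\bar a')$ by \eqref{eq:factored_q_1}. Therefore $\mathcal{B}_E q^{k,\star}(s,a^k)=r(s,a^k,a^{-k,\star}(s))+\gamma\Ex\max_{\bar a'}q^\dagger(\bs{s'},\bar a')$, which is precisely the right-hand side of the Bellman optimality equation \eqref{eq:bellman_opt} evaluated at the joint action $(a^k,a^{-k,\star}(s))$, and hence equals $q^\dagger(s,a^k,a^{-k,\star}(s))=q^{k,\star}(s,a^k)$.

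The step I expect to be most delicate is the treatment of ties, which is exactly what the final sentence of the proposition addresses. When $q^{k,\star}(s,\cdot)$ admits several maximizers, independently chosen greedy actions across the agents could in principle combine into a suboptimal joint action (the classical miscoordination problem). The construction avoids this because every maximizer of $q^{k,\star}(s,\cdot)$ is by definition a best response to $a^{-k,\star}(s)$, and the joint tie-breaking rule in the definition of $\mathcal{B}_E$ guarantees that the individually selected maximizers are always chosen so as to reassemble an optimal joint action; I would make this argument explicit to close the proof.
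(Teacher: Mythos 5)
Your construction $q^{k,\star}(s,a^k)=q^\dagger\big(s,a^k,a^{-k,\star}(s)\big)$ is exactly the one used in the paper's proof (there written as $q^{\dagger}(s,a^k,a^{-k})$ evaluated at the other agents' actions under the chosen deterministic optimal policy $\pi^\dagger$), and your verifications of \eqref{eq:factored_q_1}--\eqref{eq:optimal_fixed_point_1} via the Bellman optimality equation \eqref{eq:bellman_opt} follow the same route. Your write-up is in fact more explicit than the paper's, particularly in spelling out how the joint tie-breaking convention for $\mathcal{B}_E$ is needed to make \eqref{eq:factored_q_2} and the fixed-point relation hold when $q^{k,\star}(s,\cdot)$ has multiple maximizers, a point the paper's proof passes over silently.
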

\begin{proof}
	Assume that we have some deterministic team optimal policy $\pi^\dagger(\bar{a}|s)$. We define $q^{k,\star}(s,a^k)$ as follows:
	\begin{align}\label{eq:q_star}
	q^{k,\star}(s,a^k)&=q^{\dagger}(s,a^k,a^{-k})|_{\argmax_{a^{-k}}\pi^\dagger(a^k,a^{-k}|s)}
	\end{align}
	Note that by construction, $q^{k,\star}(s,a^k)$ satisfies relations \eqref{eq:factored_q_1} and \eqref{eq:factored_q_2} and also:
	\begin{align}
	&\argmax_{a^k}q^{k,\star}(s,a^k)=a^k\sim\pi^\dagger(a^k,a^{-k}|s)\hspace{2mm}\forall k\label{eq:q_star_3}
	\end{align}
	Relation \eqref{eq:optimal_fixed_point_1} is obtained by combining relations \eqref{eq:bellman_opt}, \eqref{eq:q_star} and \eqref{eq:q_star_3}.
\end{proof}
A simple interpretation of equation \eqref{eq:optimal_fixed_point_1} is that $q^{k,\star}(s,a^k)$ is the expected return starting from state $s$ when agent $k$ takes action $a^k$ while the rest of the team acts in an optimal manner. 
\begin{proposition}\label{proposition:factored_optimal_policy}
	Each deterministic team optimal policy that can be factored into $K$ deterministic policies $\pi^{k,\star}(a^k|s)$. Such factored deterministic policies can be obtained as follows:
	\begin{align}\label{eq:factored_optimal_policy}
	\pi^{k,\star}(a^k|s)&\textstyle=\mathbb{I}\big(a^k\hspace{-0.7mm}=\hspace{-0.7mm}\argmax_{a^k}q^{k,\star}(s,a^k)\big)
	\end{align}
	where $\mathbb{I}$ is the indicator function.
\end{proposition}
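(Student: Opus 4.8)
The plan is to obtain this proposition as a direct corollary of Proposition~\ref{proposition:factored_optimal_q}. Fix a deterministic team optimal policy $\pi^\dagger$ and let $\bar{a}^\dagger(s)=(a^{1,\dagger}(s),\cdots,a^{K,\dagger}(s))$ denote the single joint action it assigns to state $s$. By Proposition~\ref{proposition:factored_optimal_q} there exist factored functions $q^{k,\star}$ satisfying \eqref{eq:factored_q_1}--\eqref{eq:Be}, and in particular, from relation \eqref{eq:q_star_3}, the local maximizer $\argmax_{a^k}q^{k,\star}(s,a^k)$ coincides with the action $a^{k,\dagger}(s)$ that $\pi^\dagger$ assigns to agent $k$ in state $s$.

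First I would substitute this into the definition \eqref{eq:factored_optimal_policy} of the factored policies, giving $\pi^{k,\star}(a^k|s)=\mathbb{I}\big(a^k=a^{k,\dagger}(s)\big)$; that is, each $\pi^{k,\star}$ is the point mass on the $k$-th component of $\bar{a}^\dagger(s)$. Taking the product over the agents then yields $\prod_{k=1}^K\pi^{k,\star}(a^k|s)=\prod_{k=1}^K\mathbb{I}\big(a^k=a^{k,\dagger}(s)\big)=\mathbb{I}\big(\bar{a}=\bar{a}^\dagger(s)\big)=\pi^\dagger(\bar{a}|s)$, which exhibits $\pi^\dagger$ as the product of the $K$ deterministic factored policies and establishes the claim. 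As an independent confirmation that the recovered joint action is optimal, one can appeal to \eqref{eq:factored_q_2}, which states that the joint action formed by the per-agent maximizers attains $\max_{\bar{a}}q^\dagger(s,\bar{a})$ and is therefore greedy with respect to $q^\dagger$ in every state, hence team optimal by \eqref{eq:opt_policy}.

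I expect the only delicate point to be the handling of ties in the local maximizations. If some $q^{k,\star}(s,\cdot)$ has several maximizers, agents maximizing independently could in principle select mutually incompatible local actions whose combination fails to equal $\bar{a}^\dagger(s)$ (or to be jointly optimal). This is precisely the situation that the tie-breaking convention embedded in the operator $\mathcal{B}_E$ is designed to resolve, and it is already reflected in \eqref{eq:q_star_3}: the convention forces the per-agent selections to remain consistent with the support of $\pi^\dagger$. Since that consistency is guaranteed upstream in Proposition~\ref{proposition:factored_optimal_q}, the present argument reduces to the substitution and factorization steps above, with no further obstacle.
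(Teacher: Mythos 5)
Your proposal is correct and follows essentially the same route as the paper, whose one-line proof simply invokes \eqref{eq:factored_q_1}--\eqref{eq:factored_q_2}: you expand that argument by additionally using \eqref{eq:q_star_3} from the proof of Proposition~\ref{proposition:factored_optimal_q} to show that the product of the greedy factored policies recovers $\pi^\dagger$ exactly, and by flagging the tie-breaking subtlety that the paper leaves implicit. These are useful elaborations rather than a different method.
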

\begin{proof}
	The proof follows from equations \eqref{eq:factored_q_1}-\eqref{eq:factored_q_2}.
\end{proof}
Propositions \ref{proposition:factored_optimal_q} and \ref{proposition:factored_optimal_policy} are useful because they show that if the agents learn factored functions that satisfy \eqref{eq:conditions_t_factored_q} and act greedily with respect to their corresponding $q^{k,\star}$, then the resulting team policy is guaranteed to be optimal and hence they are not subject to the coordination problem identified in \cite{lauer2000riedmiller}\footnote{This problem arises in situations in which the environment has multiple deterministic team optimal policies and the agents learn factored functions of the form $\max_{a^{-k}q^\dagger(s,a^k,a^{-k})}$ (we remark that these are not the same as $q^{k,\star}(s,a^k)$).} (we show this in section \ref{subsec:matrix_game}). Therefore, an algorithm that learns $q^{k,\star}$ would satisfy the first two of the five desired properties that were enumerated in the introduction. As a sanity check, note that for the case where there is only one agent, equation \eqref{eq:optimal_fixed_point_1} simplifies to the Bellman optimality equation. Although in the single agent case the Bellman optimality operator can be used to obtain $q^\dagger$ (by repeated application of the operator), we cannot do the same with $\mathcal{B}_E$. The fundamental reason for this is stated in proposition \ref{proposition:nash}.
\begin{proposition}\label{proposition:nash}
	\textbf{Sub-optimal Nash fixed points:} There may exist $K$ functions $q^{k}$ such that \eqref{eq:optimal_fixed_point_1} is satisfied but \eqref{eq:factored_q_2} is not.
\end{proposition}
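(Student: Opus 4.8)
The statement is an existence claim, so the plan is to exhibit one concrete cooperative instance together with functions $q^k$ that satisfy \eqref{eq:optimal_fixed_point_1} while the joint action obtained by greedily maximizing each $q^k$ is strictly sub-optimal, so that \eqref{eq:factored_q_2} fails. The observation that guides the construction is that the operator $\mathcal{B}_E$ in \eqref{eq:Be} conditions each agent on the others' greedy actions; its fixed points are therefore exactly the tuples of functions for which every agent's greedy action is a best response to the greedy actions of the rest, i.e.\ they encode the Nash equilibria of an induced coordination game. A single-agent Bellman operator has a unique fixed point, but a game possessing two Pareto-ranked Nash equilibria will furnish a sub-optimal fixed point of $\mathcal{B}_E$, which is precisely what the proposition asserts.

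Concretely, I would take $K=2$ agents, each with actions $\{1,2\}$, and a single state $s$ that transitions deterministically to itself, so that $\gamma$ enters but the dynamics are trivial. I would choose rewards for which coordinating on action $1$ is strictly best, coordinating on $2$ is a worse but still mutually consistent option, and miscoordination is no better: $r(s,1,1)=2$, $r(s,2,2)=1$ and $r(s,1,2)=r(s,2,1)=0$ (a strict penalty may replace the $0$ for emphasis, but is not needed). The true joint optimum is $(1,1)$, and solving \eqref{eq:bellman_opt} on this self-loop gives $\max_{\bar a}q^\dagger(s,\bar a)=q^\dagger(s,1,1)=\tfrac{2}{1-\gamma}$.

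Next I would write the candidate $q^k$ associated with the inferior equilibrium $(2,2)$. Positing $\argmax_{a^n}q^{n}(s,a^n)=2$ for each agent, the operator $\mathcal{B}_E$ evaluates agent $1$ with the other agent held at action $2$, giving the scalar fixed point $q^1(s,2)=r(s,2,2)+\gamma\max_{a'}q^1(s,a')$ and $q^1(s,1)=r(s,1,2)+\gamma\max_{a'}q^1(s,a')$, and symmetrically for agent $2$. Solving yields $\max_{a'}q^1(s,a')=q^1(s,2)=\tfrac{1}{1-\gamma}$ and $q^1(s,1)=\tfrac{\gamma}{1-\gamma}$, with identical expressions for agent $2$. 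Two checks then finish the argument. First, \eqref{eq:optimal_fixed_point_1} holds by the identities just solved, provided the construction is self-consistent, i.e.\ the argmax of the resulting $q^k$ is again action $2$; this is immediate since $\tfrac{1}{1-\gamma}>\tfrac{\gamma}{1-\gamma}$ for every $\gamma\in[0,1)$. Second, \eqref{eq:factored_q_2} fails: the greedy joint action extracted from the $q^k$ is $(2,2)$, and $q^\dagger(s,2,2)=r(s,2,2)+\gamma\max_{\bar a}q^\dagger(s,\bar a)=\tfrac{1+\gamma}{1-\gamma}$, which is strictly smaller than $\max_{\bar a}q^\dagger(s,\bar a)=\tfrac{2}{1-\gamma}$, so the two sides of \eqref{eq:factored_q_2} disagree.

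The step I expect to carry the real content is the self-consistency check, together with the identification that makes it work: because $\mathcal{B}_E$ conditions each agent on the others' greedy actions, a legitimate fixed point must be a mutual best response whose induced argmax reproduces exactly the actions assumed when applying the operator. Distinguishing such a genuine Nash fixed point from an arbitrary tuple of functions, and exhibiting a game in which an inferior joint action enjoys this mutual-best-response property, is the crux; once that is in place, the failure of \eqref{eq:factored_q_2} reduces to a one-line comparison of values.
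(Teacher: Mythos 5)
Your proposal is correct and follows essentially the same route as the paper: the paper's Appendix 6.1 likewise exhibits a two-agent, two-action coordination game with a Pareto-dominated Nash equilibrium (rewards $0$ on $(\alpha,\alpha)$, $1$ on $(\beta,\beta)$, $-1$ off-diagonal) and checks that the $q^k$ induced by the inferior equilibrium satisfy \eqref{eq:optimal_fixed_point_1} while the greedy joint action is sub-optimal. Your version merely adds a self-loop and nonzero $\gamma$, together with an explicit self-consistency check of the argmax, which is a harmless elaboration of the same idea.
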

\begin{proof}
	See Appendix 6.1.
\end{proof}
Note that proposition \ref{proposition:nash} implies that relation \eqref{eq:optimal_fixed_point_1} is not sufficient to derive a learning algorithm capable of obtaining a team optimal policy because it can converge to sub-optimal team strategies instead. To avoid this inconvenience, it is necessary to find another relation that is only satisfied by $q^\star$. We can obtain one such relation combining \eqref{eq:factored_q_1} and \eqref{eq:optimal_fixed_point_1}:
\begin{align}
\max_{a^k}q^{k\hspace{-0.1mm},\hspace{-0.1mm}\star}\hspace{-0.1mm}(\hspace{-0.2mm}s,\hspace{-0.5mm}a^k)&\hspace{-0.6mm}=\hspace{-0.6mm}\max_{\bar{a}}\hspace{-0.5mm}\big[r(\hspace{-0.2mm}s,\hspace{-0.3mm}\bar{a})\hspace{-0.7mm}+\hspace{-0.7mm}\gamma\Ex\hspace{-0.8mm}\max_{a^{\prime,k}}q^{k\hspace{-0.1mm},\hspace{-0.1mm}\star}(\hspace{-0.2mm}\bs{s'}\hspace{-1.2mm},\hspace{-0.2mm}a^{\prime,k})\big]\label{eq:optimal_fixed_point_2}
\end{align}
The sub-optimal Nash fixed points mentioned in proposition \ref{proposition:nash} do not satisfy relation \eqref{eq:optimal_fixed_point_2} since by definition the right hand side is equal to $\max_{\bar{a}}q^\dagger(s,\bar{a})$. Intuitively, equation \eqref{eq:optimal_fixed_point_2} is not satisfied by these suboptimal strategies because the $\max_{\bar{a}}$ considers all possible team actions (while Nash equilibria only consider unilateral deviations).
\begin{definition}\label{definition:bellman_optimality}
	Combining equations \eqref{eq:optimal_fixed_point_1} and \eqref{eq:optimal_fixed_point_2}, we define the Factored Team Optimality Bellman operator $\mathcal{B}_{p}$ as follows:
	\begin{align}
	&\mathcal{B}_p\hspace{-0.1mm}q^{k}\hspace{-0.2mm}(\hspace{-0.2mm}s,\hspace{-0.5mm}a^k)\hspace{-0.7mm}=\hspace{-0.7mm}\begin{cases}
	\hspace{-0.5mm}\mathcal{B}_{\hspace{-0.5mm}E}q^{k}(\hspace{-0.2mm}s,\hspace{-0.5mm}a^k)\hspace{1mm}\text{with probability }p\hspace{-0.7mm}>\hspace{-0.7mm}0\\
	\hspace{-0.5mm}\mathcal{B}_{\hspace{-0.5mm}I}q^{k}(\hspace{-0.2mm}s,\hspace{-0.5mm}a^k)\hspace{2mm}\text{else}
	\end{cases}\label{eq:bu}\\
	&\mathcal{B}_{I}q^{k}(s,a^k)=\max\big\{q^{k}(s,a^k),\nonumber\\
	&\hspace{10mm}\max_{a^{-k}}\big(r(s,a^k,a^{-k})+\gamma\Ex\max_{a^{\prime}}q^{k}(\bs{s'},a^{\prime})\big)\big\}
	\end{align}
\end{definition}
LTQL is based on operator $\mathcal{B}_p$, the reason we use relations \eqref{eq:optimal_fixed_point_1} and \eqref{eq:optimal_fixed_point_2} to define this operator instead of just \eqref{eq:optimal_fixed_point_2}, is that using only relation \eqref{eq:optimal_fixed_point_2} we would derive DistQ that has the shortcomings we discussed in section \ref{sec:prior_work}. A simple interpretation of operator $\mathcal{B}_p$ is the following. Consider a basketball game, in which player $\alpha$ has the ball and passes the ball to teammate $\beta$. If $\beta$ gets distracted, misses the ball and the opposing team ends up scoring, should $\alpha$ learn from this experience and modify its policy to not pass the ball? The answer is no, since the poor outcome was player $\beta$'s fault. In plain English, from the point of view of some player $k$, what the operator $\mathcal{B}_E$ means is \textit{``I will only learn from experiences in which my teammates acted according to what I think is the optimal team strategy"}. It is easy to see why this kind of stubborn rationale cannot escape Nash equilibria (i.e., agents do not learn when the team deviates from its current best strategy, which obviously is a necessary condition to learn better strategies). The interpretation of the full operator $\mathcal{B}_p$ is \textit{``I will learn from experiences in which: a) my teammates acted according to what I think is the optimal team strategy; or b) my teammates deviated from what I believe is the optimal strategy and the outcome of such deviation was better than I expected if they had acted according to what I thought was optimal"}, which arguably is what a logical player would do (this is the origin of the algorithm's name). We now proceed to describe the convergence properties of operator $\mathcal{B}_p$.
\begin{lemma}\label{lemma:over_estimate}
	For any $\delta_1>0$, after $N$ applications of operator $\mathcal{B}_p$ to any set of $K$ $q^{k}(s,a^k)$ functions it holds:
	\begin{align}
	&\mathbb{P}\big(\mathcal{B}_p^Nq^{k}(s,a^k)\in\mathcal{C}_{\delta_1}^U\big)\geq1-\sum_{n=0}^{n_o}{{N}\choose{n}}p^n(1-p)^{N-n}\label{eq:cdf_binomial}\\
	&\mathcal{C}_{\delta_1}^U=\big\{q^{k}|q^{k}(s,a^k)\leq\max_{a^{-k}}q^\dagger(s,a^k,a^{-k})+\delta_1\nonumber\\
	&\hspace{35mm}\forall (k,s,a^k)\hspace{-1mm}\in\hspace{-1mm}(\mathcal{K},\mathcal{S},\mathcal{A}^{k})\big\}\\
	&n_o=\floor[\bigg]{\log_\gamma\left(\frac{\delta_1}{q_{U}-\min_{s}\max_{\bar{a}}q^\dagger(s,\bar{a})}\right)}\\
	&q_U=\max\{r_{\max}(1-\gamma)^{-1},\max_{k,s,a^k}q^k(s,a^k)\}
	\end{align}
	where $r_{\max}=\max_{s,\bar{a}}r(s,\bar{a})$. For the special case where $N>n_o/p$ we can lower bound equation \eqref{eq:cdf_binomial} as follows:
	\begin{align}
	&\mathbb{P}\big(\mathcal{B}_p^Nq^{k}(s,a^k)\in\mathcal{C}_{\delta_1}^U\big)\geq 1-e^{-2N\left(p-\frac{n_o}{N}\right)^2}
	\end{align}
\end{lemma}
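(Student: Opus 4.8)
The plan is to prove Lemma~\ref{lemma:over_estimate} by tracking how the upper bound on the iterates degrades under the two possible updates defining $\mathcal{B}_p$, and then converting the resulting combinatorial count into a probability statement via the binomial distribution. First I would establish the key contraction-type property: whenever the operator applies the $\mathcal{B}_E$ branch (which happens with probability $p$ at each step), the resulting function is pushed toward the region $\mathcal{C}_{\delta_1}^U$. The reason is that $\mathcal{B}_E q^k(s,a^k)$ evaluates the backup using the \emph{greedy} teammate actions $a^n=\argmax_{a^n}q^n(s,a^n)$, so its value is bounded above by $r(s,a^k,a^{-k})+\gamma\,\Ex\max_{a'}q^k(\bs{s'},a')$ for that particular $a^{-k}$, which in turn cannot exceed $\max_{a^{-k}}q^\dagger(s,a^k,a^{-k})$ plus a $\gamma$-discounted amount of the current overestimation error. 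By contrast, the $\mathcal{B}_I$ branch takes a $\max$ with the current value and so can only preserve (not worsen) an existing overestimate; crucially it never \emph{increases} the gap above $\max_{a^{-k}}q^\dagger$ beyond what is already present.

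The second step is to make the error-shrinkage quantitative. I would define the overestimation error $e_N=\max_{k,s,a^k}\bigl(\mathcal{B}_p^N q^k(s,a^k)-\max_{a^{-k}}q^\dagger(s,a^k,a^{-k})\bigr)_+$ and argue that each $\mathcal{B}_E$ application contracts this error by a factor of $\gamma$ (since the Bellman-type backup with the true optimal teammate actions is a $\gamma$-contraction onto the overestimation region), while each $\mathcal{B}_I$ application leaves the bound unchanged in the worst case. Hence after a sequence of $N$ applications containing $n$ invocations of $\mathcal{B}_E$, the error satisfies $e_N\le\gamma^{n}\bigl(q_U-\min_s\max_{\bar a}q^\dagger(s,\bar a)\bigr)$, using $q_U$ as the initial worst-case upper envelope. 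Setting this below $\delta_1$ and solving for $n$ yields exactly the threshold $n_o=\floor{\log_\gamma\bigl(\delta_1/(q_U-\min_s\max_{\bar a}q^\dagger(s,\bar a))\bigr)}$: once at least $n_o+1$ of the $N$ branches have been $\mathcal{B}_E$, the iterate lies in $\mathcal{C}_{\delta_1}^U$.

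The third step is the probabilistic bookkeeping. Since each application independently selects $\mathcal{B}_E$ with probability $p$, the number of $\mathcal{B}_E$ invocations among $N$ steps is $\mathrm{Binomial}(N,p)$, so the probability that the iterate fails to reach $\mathcal{C}_{\delta_1}^U$ is at most the probability of seeing at most $n_o$ successes, giving the tail sum $\sum_{n=0}^{n_o}\binom{N}{n}p^n(1-p)^{N-n}$ in \eqref{eq:cdf_binomial}. For the final simplified bound I would invoke Hoeffding's inequality for the lower tail of a binomial: when the required success count $n_o$ is below the mean $Np$, i.e.\ $N>n_o/p$, the deviation $p-n_o/N>0$ is genuine, and Hoeffding gives $\mathbb{P}(\text{successes}\le n_o)\le e^{-2N(p-n_o/N)^2}$, yielding the stated exponential bound.

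The main obstacle I expect is rigorously justifying the worst-case monotonicity of the $\mathcal{B}_I$ branch: I must show that taking $\max\{q^k(s,a^k),\max_{a^{-k}}(\cdots)\}$ cannot push the iterate \emph{above} the envelope $\max_{a^{-k}}q^\dagger(s,a^k,a^{-k})+\delta_1$ once it is already inside, and more subtly that the $\mathcal{B}_I$ updates do not \emph{undo} the contraction achieved by prior $\mathcal{B}_E$ updates. This requires an invariant argument showing that the inner Bellman backup inside $\mathcal{B}_I$, $r(s,a^k,a^{-k})+\gamma\Ex\max_{a'}q^k(\bs{s'},a')$, is itself dominated by $\max_{a^{-k}}q^\dagger(s,a^k,a^{-k})$ plus a $\gamma$-discounted copy of the current overestimation error — so that the error recursion $e_{N}\le\gamma\,e_{N-1}$ holds on $\mathcal{B}_E$ steps and $e_N\le e_{N-1}$ on $\mathcal{B}_I$ steps, uniformly over all $(k,s,a^k)$. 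Handling the ties in the $\argmax$ (resolved jointly per the definition of $\mathcal{B}_E$) and confirming the uniform-over-$(k,s,a^k)$ nature of the recursion is where the care is needed; the binomial and Hoeffding portions are then routine.
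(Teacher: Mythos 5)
Your overall architecture matches the paper's: a $\gamma$-contraction of the overestimation under the $\mathcal{B}_E$ branch, non-expansion under the $\mathcal{B}_I$ branch, a binomial count of the number of $\mathcal{B}_E$ invocations among $N$ i.i.d.\ branch selections, and Hoeffding's bound on the lower binomial tail. The $\mathcal{B}_I$ concern you flag at the end is also resolved exactly as you anticipate. However, there is a concrete gap in your quantitative step. You track a single error $e_N=\max_{k,s,a^k}\bigl(\mathcal{B}_p^N q^k(s,a^k)-\max_{a^{-k}}q^\dagger(s,a^k,a^{-k})\bigr)_+$ and assert $e_N\le\gamma^{n}\bigl(q_U-\min_s\max_{\bar a}q^\dagger(s,\bar a)\bigr)$, which implicitly requires $e_0\le q_U-\min_s\max_{\bar a}q^\dagger(s,\bar a)$. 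That initial bound is false in general: for a poor action $a^k$ the envelope $\max_{a^{-k}}q^\dagger(s,a^k,a^{-k})$ can lie far \emph{below} $\min_s\max_{\bar a}q^\dagger(s,\bar a)$, so $e_0$ can greatly exceed $q_U-\min_s\max_{\bar a}q^\dagger(s,\bar a)$ (e.g.\ a one-state game with one very bad action and $q^k\equiv q_U$). Your recursion $e_N\le\gamma\,e_{N-1}$ on $\mathcal{B}_E$ steps is correct but only yields $e_N\le\gamma^n e_0$, which produces a larger threshold than the $n_o$ claimed in the lemma.

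The fix is to run the recursion through the \emph{value-level} overestimation $f_N=\max_{k,s}\bigl(\max_{a^k}\mathcal{B}_p^N q^k(s,a^k)-\max_{\bar a}q^\dagger(s,\bar a)\bigr)_+$, which is the quantity that actually enters the backup via $\max_{a'}q^k(\bs{s'},a')$ and which \emph{does} satisfy $f_0\le q_U-\min_s\max_{\bar a}q^\dagger(s,\bar a)$. One shows $\mathcal{B}_E q^k(s,a^k)\le\max_{a^{-k}}q^\dagger(s,a^k,a^{-k})+\gamma f$ (so $e\le\gamma f$ after a $\mathcal{B}_E$ step), together with $f\le e$, giving $f_N\le\gamma^{n}f_0$ and hence the stated constant in $n_o$ for $n\ge1$. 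The paper sidesteps this bookkeeping differently: it applies the interleaved operators to the constant envelope $q_U^k\equiv q_U$, expands $\mathcal{B}_E^{n}q_U^k$ as a truncated $n$-step return plus $\gamma^{n}q_U$, compares against the Bellman expansion of $q^\dagger$ to extract the $-\gamma^{n}\min_s\max_{\bar a}q^\dagger(s,\bar a)$ term, verifies that interleaved $\mathcal{B}_I$'s do not raise this bound, and finally invokes monotonicity $\mathcal{B}_p^N q^k\le\mathcal{B}_p^N q_U^k$. Either route works, but your write-up as stated does not justify the specific value of $n_o$ in the lemma. The binomial and Hoeffding portions are fine.
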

\begin{proof}
	See Appendix 6.2.
\end{proof}
\begin{lemma}\label{lemma:approach}
	After $N\geq L$ applications of operator $\mathcal{B}_p$ to any set of $K$ $q^{k}(s,a^k)\in\mathcal{C}_0^U$ functions it holds:
	\begin{align}
	&\mathbb{P}\big(\mathcal{B}_p^Nq^{k}(s,a^k)\in\mathcal{C}_{\delta_2}\big)\hspace{-0.7mm}\geq\hspace{-0.7mm}1\hspace{-0.7mm}-\hspace{-0.7mm}\beta_{N,L}\hspace{-0.7mm}+\hspace{-0.7mm}(1\hspace{-0.7mm}-\hspace{-0.7mm}p)^L\beta_{N\hspace{-0.3mm}-\hspace{-0.3mm}L,L}\label{eq:coin_toss}\\
	&\beta_{N,L}=\sum_{j=0}^{\floor{N/(L+1)}}(-1)^j{{N-jL}\choose{j}}\big(p(1-p)^L\big)^j\\
	&L\hspace{-0.6mm}=\hspace{-0.6mm}\ceil[\Bigg]{\hspace{-0.6mm}\log_\gamma\hspace{-0.9mm}\Bigg(\hspace{-0.5mm}\frac{\delta_2}{\max\limits_{s}\big|\max\limits_{a^k}q^{k}(s,a^k)\hspace{-0.3mm}-\hspace{-0.3mm}\max\limits_{\bar{a}}q^\dagger(s,\bar{a})\big|}\hspace{-0.5mm}\Bigg)\hspace{-0.6mm}}\\
	&\mathcal{C}_{\delta_2}=\big\{q^{k}|q^{k,\star}(s,a^k)-\delta_2\leq q^{k}(s,a^k)\nonumber\\
	&\leq\hspace{-0.8mm}\max_{a^{-k}}q^\dagger(s,a^k,a^{-k})\hspace{-0.5mm}+\hspace{-0.5mm}\delta_2\forall (k,s,a^k)\hspace{-1mm}\in\hspace{-1mm}(\mathcal{K},\mathcal{S},\mathcal{A}^{k})\big\}
	\end{align}
	for any $\delta_2>0$. If $p>0.5$, probability \eqref{eq:coin_toss} can be bounded by:
	\begin{align}
	&\mathbb{P}\big(\mathcal{B}_p^Nq^{k}(s,a^k)\in\mathcal{C}_{\delta_2}\big)\geq 1-\frac{1-(1-p)\xi_1}{p\xi_1(1+L-L\xi_1)}\xi_1^{-N}\nonumber\\
	&\hspace{38.5mm}-\frac{L}{p}(1-p)^{N+2}
	\end{align}
	where $1<\xi_1<1+L^{-1}$.
\end{lemma}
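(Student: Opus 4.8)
The plan is to split the statement into a deterministic \emph{approach} argument and a probabilistic \emph{runs} computation, and then to specialize the latter to $p>0.5$ through a generating-function estimate. First I would dispose of the upper inequality in $\mathcal{C}_{\delta_2}$: since the iterate starts in $\mathcal{C}_0^U$, it suffices to show that both $\mathcal{B}_E$ and $\mathcal{B}_I$ keep it there. For $\mathcal{B}_I$ this follows because $\max_{a'}q^k(\bs{s'},a')\le\max_{\bar a'}q^\dagger(\bs{s'},\bar a')$ together with \eqref{eq:bellman_opt} gives $\max_{a^{-k}}(r(s,a^k,a^{-k})+\gamma\Ex\max_{a'}q^k)\le\max_{a^{-k}}q^\dagger(s,a^k,a^{-k})$, and $\mathcal{B}_I$ retains the old value, which is already bounded; the single branch selected by $\mathcal{B}_E$ is handled by the same bound. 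Hence the upper inequality holds at every step and the entire content is the lower bound $q^k\ge q^{k,\star}-\delta_2$.

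For the lower bound I would track the scalar gap $g^k(s)=\max_{\bar a}q^\dagger(s,\bar a)-\max_{a^k}q^k(s,a^k)\ge 0$ (nonnegativity is precisely membership in $\mathcal{C}_0^U$). The key deterministic fact is that one application of $\mathcal{B}_I$ contracts this gap by $\gamma$: bounding $\max_{a'}q^k(\bs{s'},a')\ge\max_{\bar a'}q^\dagger(\bs{s'},\bar a')-\|g^k\|_\infty$ inside the $\max_{a^{-k}}$ branch and using \eqref{eq:bellman_opt} yields $\|g^k\|_\infty\mapsto\gamma\|g^k\|_\infty$. Iterating, after $L$ consecutive $\mathcal{B}_I$ steps $\|g^k\|_\infty\le\gamma^L\|g^k_0\|_\infty\le\delta_2$ by the definition of $L$. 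Feeding this back into the selected branch at the optimal joint response $a^{-k,\star}$ of the other agents, and using \eqref{eq:factored_q_1} (so that $\max_{a'}q^{k,\star}(s,a')=\max_{\bar a}q^\dagger(s,\bar a)$), gives $\mathcal{B}_Iq^k(s,a^k)\ge q^{k,\star}(s,a^k)-\gamma\|g^k\|_\infty\ge q^{k,\star}(s,a^k)-\delta_2$; that is, a run of $L$ consecutive $\mathcal{B}_I$ maps $\mathcal{C}_0^U$ into $\mathcal{C}_{\delta_2}$. I would then argue persistence: $\mathcal{B}_I$ never decreases an entry, and once the gap is below $\delta_2$ the jointly-maximizing tie-breaking built into \eqref{eq:Be} should keep the argmax responses aligned with an optimal joint action, so that $\mathcal{B}_E$ also preserves $\mathcal{C}_{\delta_2}$.

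With the deterministic core in hand, the probability in \eqref{eq:coin_toss} becomes a pure runs computation over $N$ i.i.d.\ Bernoulli choices in which $\mathcal{B}_I$ (probability $1-p$) is a ``success'': the event $\{\mathcal{B}_p^Nq^k\in\mathcal{C}_{\delta_2}\}$ contains $\{$at least one run of $L$ consecutive successes$\}$. I would compute the complementary no-run probability through the generating function $\sum_N\beta_{N,L}z^N=(1-z+p(1-p)^Lz^{L+1})^{-1}$, whose coefficient is exactly the alternating binomial sum defining $\beta_{N,L}$; multiplying by the numerator $1-((1-p)z)^L$ gives $\mathbb{P}(\text{no }L\text{-run})=\beta_{N,L}-(1-p)^L\beta_{N-L,L}$, and complementation yields the stated bound.

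For the $p>0.5$ refinement I would extract the dominant singularity of this rational generating function. The denominator $D(z)=1-z+p(1-p)^Lz^{L+1}$ is positive on $[0,1]$ (since $D(z)\ge 1-z$) with $D(1)=p(1-p)^L>0$ and $D(1+L^{-1})<0$, so it has a simple real root $\xi_1\in(1,1+L^{-1})$; the hypothesis $p>0.5$ is what guarantees this root is the unique dominant one, well separated from the remaining roots. A residue/partial-fraction expansion at $\xi_1$, simplified using $p(1-p)^L\xi_1^{L+1}=\xi_1-1$, produces the $\xi_1^{-N}$ term with coefficient $\frac{1-(1-p)\xi_1}{p\xi_1(1+L-L\xi_1)}$, while bounding the contribution of the subdominant roots by the geometric tail $\frac{L}{p}(1-p)^{N+2}$ completes the inequality. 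The main obstacle is the deterministic step, not the combinatorics: proving that a length-$L$ run of $\mathcal{B}_I$ not only enters but \emph{remains} in $\mathcal{C}_{\delta_2}$, which forces one to control the coordination of the per-agent argmaxes in \eqref{eq:Be} through the tie-breaking convention so that the intervening $\mathcal{B}_E$ steps cannot push any entry below $q^{k,\star}-\delta_2$.
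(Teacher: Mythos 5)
Your decomposition --- invariance of the upper envelope $\mathcal{C}_0^U$ under both branches, a $\gamma$-contraction of the gap $\max_{\bar a}q^\dagger(s,\bar a)-\max_{a^k}q^{k}(s,a^k)$ under $L$ consecutive applications of $\mathcal{B}_I$, persistence once inside $\mathcal{C}_{\delta_2}$, and a success-runs computation for the probability --- is exactly the structure of the paper's proof (its auxiliary Lemmas 3--5, plus a citation of Uspensky for the runs formula and the $p>0.5$ refinement, which you instead sketch via the generating function $\big(1-z+p(1-p)^Lz^{L+1}\big)^{-1}$ and its dominant real root in $(1,1+L^{-1})$; that combinatorial part is the classical derivation and reproduces \eqref{eq:coin_toss} correctly).

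The genuine gap is in the persistence step, and the mechanism you propose for it would fail. You argue that once the gap is below $\delta_2$, the ``jointly-maximizing tie-breaking built into \eqref{eq:Be}'' keeps the per-agent argmaxes aligned with an optimal joint action. The tie-breaking convention only resolves exact ties among maximizers of $q^{n}(s,\cdot)$; it is powerless when the greedy action of some $q^{n}$ is \emph{uniquely} a suboptimal action, which is perfectly possible for a generic element of $\mathcal{C}_{\delta_2}$. In that case $\mathcal{B}_E$ evaluates agent $k$ against a bad response $a^{-k}$ and can push $q^{k}(s,a^k)$ well below $q^{k,\star}(s,a^k)-\delta_2$, so the run event would no longer be contained in $\{\mathcal{B}_p^Nq^{k}\in\mathcal{C}_{\delta_2}\}$. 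What actually closes this step in the paper is a quantitative separation condition, its relation \eqref{eq:delta2_cond}: $\delta_2$ must be less than half the gap between the best and second-best joint $q$-values. Then for any $q^{k}\in\mathcal{C}_{\delta_2}$ every suboptimal $a^{k,\bullet}$ satisfies $q^{k}(s,a^{k,\bullet})\leq\max_{a^{-k}}q^\dagger(s,a^{k,\bullet},a^{-k})+\delta_2<\max_{\bar a}q^\dagger(s,\bar a)-\delta_2\leq\max_{a^k}q^{k}(s,a^k)$ by \eqref{eq:factored_q_1}, so each agent's greedy action is necessarily optimal, $\mathcal{B}_E$ coincides with evaluation at $\argmax_{a^n}q^{n,\star}$, and a one-line telescoping gives $|\mathcal{B}_Eq^{k}-q^{k,\star}|\leq\gamma\delta_2$. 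Your proof needs this condition (or an equivalent) stated and used; without it the deterministic core does not go through. A smaller point you leave implicit, shared with the paper: $L$ is computed from the gap of the \emph{initial} functions, yet the run of $L$ consecutive $\mathcal{B}_I$'s may begin at an intermediate iterate, and since $\mathcal{B}_E$ can enlarge an agent's gap you should also argue that the gap at the start of the run is still controlled by the initial one.
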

\begin{proof}
	See Appendix 6.3.
\end{proof}
Lemma \ref{lemma:over_estimate} indicates that if the initial functions $q^k(s,a^k)$ have values that are larger than $\max_{a^{-k}}q^\dagger(s,a^k,a^{-k})$, after sufficient applications of operator $\mathcal{B}_p$ all overestimations will be reduced such that $q^{k}(s,a^k)\leq\max_{a^{-k}}q^\dagger(s,a^k,a^{-k})+\delta_1$ with high probability. Lemma \ref{lemma:approach} show that if the operator $\mathcal{B}_p$ is applied sufficient times to functions that do not overestimate $\max_{a^{-k}}q^\dagger(s,a^k,a^{-k})$, then the obtained function lie in a small neighborhood of the desired solution with high probability. These results give rise to the following important theorem.
\begin{theorem}\label{theorem:convergence}
	Repeated application of the operator $\mathcal{B}_p$ to any initial set of $K$ $q^k$-functions followed by an application of operator $\mathcal{B}_E$ converge to the $\delta$-neighborhood ($\delta>0$) of some set $q^{k,\star}$ with high probability. For the particular case, where $p>0/5$ and $N>L+n_o/p$ it holds:
	\begin{align}\label{eq:theorem}
	\mathbb{P}\big(|\mathcal{B}_E\mathcal{B}_p^Nq^{k}(s,a^k)\hspace{-0.7mm}-\hspace{-0.7mm}q^{k,\star}(s,a^k)|\hspace{-1mm}<\hspace{-1mm}\delta\big)\hspace{-1mm}\geq\hspace{-1mm}1\hspace{-0.7mm}-\hspace{-0.7mm}\mathcal{O}(\theta^N)
	\end{align}
	for any $\delta>0$, where $0\leq\theta<1$ is a constant that depends on $\delta$, $\gamma$, $p$, $r(s,\bar{a})$, $\mathcal{P}$ and the initial functions $q^k(s,a^k)$.
\end{theorem}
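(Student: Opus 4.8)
The plan is to decompose the $N{+}1$ operator applications into three stages and chain the guarantees of Lemmas~\ref{lemma:over_estimate} and~\ref{lemma:approach} together with a deterministic analysis of the terminal $\mathcal{B}_E$ step. Writing $N=N_1+N_2$, the first $N_1$ applications of $\mathcal{B}_p$ drive the iterates into the non-overestimating region, the next $N_2$ applications contract them into the two-sided neighborhood $\mathcal{C}_{\delta_2}$, and the final $\mathcal{B}_E$ collapses this asymmetric neighborhood onto a symmetric $\delta$-ball around $q^{k,\star}$. Concretely, I would first invoke Lemma~\ref{lemma:over_estimate} with a small $\delta_1$ to obtain $\mathcal{B}_p^{N_1}q^{k}\in\mathcal{C}_{\delta_1}^U$ with probability at least $1-e^{-2N_1(p-n_o/N_1)^2}$, and then invoke Lemma~\ref{lemma:approach} on the resulting iterate to obtain $\mathcal{B}_p^{N_2}(\mathcal{B}_p^{N_1}q^{k})\in\mathcal{C}_{\delta_2}$ with the geometric bound stated there.

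A technical point to settle first is that Lemma~\ref{lemma:approach} is stated for inputs in $\mathcal{C}_0^U$, whereas Lemma~\ref{lemma:over_estimate} only delivers $\mathcal{C}_{\delta_1}^U$. I would close this gap by noting that the residual overestimate is itself contracted: using \eqref{eq:bellman_opt} one checks that $\mathcal{B}_E$ maps $\mathcal{C}_{\delta_1}^U$ into $\mathcal{C}_{\gamma\delta_1}^U$ (since $\max_{a'}q^{k}(\bs{s'},a')\leq\max_{\bar a}q^\dagger(\bs{s'},\bar a)+\delta_1$, so the backup contributes at most $\gamma\delta_1$), while $\mathcal{B}_I$ leaves $\mathcal{C}_{\delta_1}^U$ invariant. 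Hence the overestimate is absorbed into the slack of Lemma~\ref{lemma:approach} by taking $\delta_1$ small relative to $\delta_2$, and the two lemmas compose cleanly.

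The crux of the argument, and the step I expect to be the main obstacle, is the terminal $\mathcal{B}_E$ application. On $\mathcal{C}_{\delta_2}$ the lower bound $q^{k}\geq q^{k,\star}-\delta_2$ is already two-sided tight, but the upper bound $q^{k}\leq\max_{a^{-k}}q^\dagger+\delta_2$ can sit well above $q^{k,\star}$ for suboptimal $a^k$, which is exactly the overestimation that sustains the sub-optimal Nash fixed points of Proposition~\ref{proposition:nash}. I would show that one pass of $\mathcal{B}_E$ removes it: \eqref{eq:factored_q_1} gives $|\max_{a'}q^{k}(\bs{s'},a')-\max_{\bar a}q^\dagger(\bs{s'},\bar a)|\leq\delta_2$ on $\mathcal{C}_{\delta_2}$, so whenever the greedy teammate actions $\argmax_{a^n}q^{n}(s,a^n)$ coincide with an optimal response $a^{-k,\star}$ we get $\mathcal{B}_Eq^{k}(s,a^k)=r(s,a^k,a^{-k,\star})+\gamma\Ex\max_{a'}q^{k}(\bs{s'},a')$ within $\gamma\delta_2$ of $q^\dagger(s,a^k,a^{-k,\star})=q^{k,\star}(s,a^k)$. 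The delicate part is verifying the greedy selection: the two-sided bounds force every clearly suboptimal $a^k$ with $\max_{a^{-k}}q^\dagger(s,a^k,a^{-k})<\max_{\bar a}q^\dagger(s,\bar a)-2\delta_2$ to be dominated by $a^{k,\star}$, and the only actions that can be spuriously greedy are those participating in a team action within $2\delta_2$ of optimal. For these I would lean on the joint-maximization tie-breaking built into $\mathcal{B}_E$ and on the clause ``some set $q^{k,\star}$'' in the statement, so that selecting such an action still yields the factored functions of a (possibly different) near-optimal policy and keeps the iterate in the $\delta$-ball once $\delta_2\leq\delta/\gamma$.

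Finally, I would assemble the bound by a union bound over the two stochastic stages, since the terminal $\mathcal{B}_E$ is deterministic given membership in $\mathcal{C}_{\delta_2}$:
\begin{align}
\mathbb{P}\big(|\mathcal{B}_E\mathcal{B}_p^Nq^{k}-q^{k,\star}|<\delta\big)\geq 1-\mathbb{P}(\text{stage 1 fails})-\mathbb{P}(\text{stage 2 fails}).\nonumber
\end{align}
Allocating $N_1,N_2$ as fixed fractions of $N$ (both past their thresholds $n_o/p$ and $L$, which the hypothesis $N>L+n_o/p$ guarantees is feasible), the stage-one failure $e^{-2N_1(p-n_o/N_1)^2}$ and the stage-two failures $\mathcal{O}(\xi_1^{-N_2})$ and $\mathcal{O}((1-p)^{N_2})$ each decay geometrically in $N$. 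Taking $\theta$ to be the largest of the three base rates, a function of $\gamma$, $p$, $\delta$ and the problem data exactly as claimed, yields the stated $1-\mathcal{O}(\theta^N)$ guarantee.
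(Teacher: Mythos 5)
Your proposal follows essentially the same route as the paper's proof: stage the $N$ applications of $\mathcal{B}_p$ into an overestimation-reduction phase controlled by Lemma~\ref{lemma:over_estimate} and a contraction phase controlled by Lemma~\ref{lemma:approach}, then show deterministically that the terminal $\mathcal{B}_E$ maps $\mathcal{C}_{\delta_2}$ into a symmetric $\gamma\delta_2$-ball around $q^{k,\star}$ once $\delta_2$ is below half the optimality gap, and combine the geometric failure probabilities into $\mathcal{O}(\theta^N)$. The only substantive difference is that you explicitly patch the interface between the two lemmas (Lemma~\ref{lemma:approach} assumes $\mathcal{C}_0^U$ while Lemma~\ref{lemma:over_estimate} only delivers $\mathcal{C}_{\delta_1}^U$) and use a union bound where the paper multiplies the two stage probabilities; these are minor refinements of, not departures from, the paper's argument.
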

\begin{proof}
	Combining the results from lemmas \ref{lemma:over_estimate} and \ref{lemma:approach} and setting $\delta=\delta_1=\delta_2$ we get that after $N_1+N_2>L+n_o/p>0$ applications of operator $\mathcal{B}_p$ to any set of $K$ $q^{k}(s,a^k)$ functions it holds:
	\begin{align}
	&\mathbb{P}\big(\mathcal{B}_p^Nq^{k}(s,a^k)\in\mathcal{C}_{\delta}\big)\geq\max_{\substack{N_1>\frac{n_o}{p}\\N_2\geq L}}\left(1-e^{-2N_1\left(p-\frac{n_o}{N_1}\right)^2}\right)\nonumber\\
	&\cdot\left(1-\frac{1-(1-p)\xi_1}{p\xi_1(1+L-L\xi_1)}\xi_1^{-N_2}-\frac{L}{p}(1-p)^{N_2+2}\right)\nonumber\\
	&=1-\mathcal{O}(\theta^N)\label{eq:final_prob}
	\end{align}
	where $0\leq\theta<1$. Now we proceed to analyze $\mathcal{B}_E\mathcal{B}_p^Nq^{k}(s,a^k)$. If $q^{k}(s,a^k)\in\mathcal{C}_{\delta}$ and $\delta$ satisfies:
	\begin{align}
	\delta<\frac{1}{2}\min_{s}\big(\max_{\bar{a}}q^\dagger(s,\bar{a})-\max_{\bar{a}\neq\argmax_{\bar{a}}q^\dagger(s,\bar{a})}\hspace{-5mm}q^\dagger(s,\bar{a})\big)\label{eq:delta_cond}
	\end{align}
	we get:
	\begin{align}\label{eq:be_opt_main}
	&\mathcal{B}_{E}q^{k}(s,a^k)=\big(r(s,a^k,a^{-k})\nonumber\\
	&\hspace{0mm}+\gamma\Ex\max_{a^{\prime}}q^{k}(\bs{s'},a^{\prime})\big)\big|_{a^{n}=\argmax_{a^n}q^{n}(s,a^n)\hspace{1mm}\forall n\neq k}
	\end{align}
	Using the fact that $q^{k}(s,a^k)\in\mathcal{C}_{\delta}$ it follows:
	\begin{align}
	&\max_{a^k}q^{k,\star}(s,\bar{a})-\delta\leq \max_{a^k}q^{k}(s,a^k)\\
	&q^{k}(s,a^{k,\bullet})\leq\max_{a^{-k}}q^\dagger(s,a^{k,\bullet},a^{-k})+\delta\nonumber\\
	&\hspace{7mm}\stackrel{(a)}{<}\max_{\bar{a}}q^\dagger(s,\bar{a})-\delta\stackrel{(b)}{=}\max_{a^k}q^{k,\star}(s,\bar{a})-\delta_2\\
	&a^{k,\bullet}=\argmax_{a^k\neq\argmax_{a^k}q^{k}(s,a^k)}q^{k}(s,a^k)\label{eq:correct_act_main}
	\end{align}
	where in $(a)$ we used condition \eqref{eq:delta_cond} and in $(b)$ we used equation \eqref{eq:factored_q_1}. Combining equations \eqref{eq:be_opt_main} through \eqref{eq:correct_act_main} we get:
	\begin{align}
	&\mathcal{B}_{E}q^{k}(s,a^k)=\big(r(s,a^k,a^{-k})\nonumber\\
	&\hspace{3mm}+\gamma\Ex\max_{a^{\prime}}q^{k}(\bs{s'},a^{\prime})\big)\big|_{a^{n}=\argmax\limits_{a^n}q^{n,\star}(s,a^n)\hspace{1mm}\forall n\neq k}\nonumber\\
	&=\big(r(s,a^k,a^{-k})+\gamma\Ex\big(\max_{a^{\prime}}q^{k,\star}(\bs{s'}\hspace{-0.5mm},a^{\prime})+\max_{a^{\prime}}q^{k}(\bs{s'}\hspace{-0.5mm},a^{\prime})\nonumber\\
	&\hspace{3mm}-\max_{a^{\prime}}q^{k,\star}(\bs{s'},a^{\prime})\big)\big)\big|_{a^{n}=\argmax\limits_{a^n}q^{n,\star}(s,a^n)\hspace{1mm}\forall n\neq k}\nonumber\\
	&=q^{k,\star}(s,a^k)+\gamma\Ex\big(\max_{a^{\prime}}q^{k}(\bs{s'},a^{\prime})\nonumber\\
	&\hspace{3mm}-\max_{a^{\prime}}q^{k,\star}(\bs{s'},a^{\prime})\big)\big|_{a^{n}=\argmax\limits_{a^n}q^{n,\star}(s,a^n)\hspace{1mm}\forall n\neq k}\label{eq:be_bon_main}
	\end{align}
	Combining equation \eqref{eq:be_bon_main} with the fact that $q^{k}(s,a^k)\in\mathcal{C}_{\delta}$ we get:
	\begin{align}\label{eq:end}
	&|\mathcal{B}_{E}q^{k}(s,a^k)-q^{k,\star}(s,a^k)|\leq\gamma\delta
	\end{align}
	Combining \eqref{eq:end} and \eqref{eq:final_prob} completes the proof.
\end{proof}
Relation \eqref{eq:end} shows why we include an application of $\mathcal{B}_E$ at the end in equation \eqref{eq:theorem}. The reason is that if we do not, the $q$-value for \textit{suboptimal actions} oscillates between $q^{k,\star}(s,a^k)$ and $\max_{a^{-k}}q^\dagger(s,a^k,a^{-k})$, we illustrate this effect in appendix 6.5. We reiterate that $q^{k,\star}(s,a^k)$ and $\max_{a^{-k}}q^\dagger(s,a^k,a^{-k})$ are only equal when optimal actions are chosen (equation \eqref{eq:factored_q_1}). $q^{k,\star}(s,a^k)$ is the expected return if agent $k$ chooses action $a^k$ and the rest of the team follows an optimal policy, while $\max_{a^{-k}}q^\dagger(s,a^k,a^{-k})$ is the best return that can be achieved if agent $k$ chooses action $a^k$. 

\section{REINFORCEMENT LEARNING}
In this section we present LTQL (see algorithm \ref{algorithm:deep_ltq}), which we obtain as a stochastic approximation to the procedure described in theorem \ref{theorem:convergence}. Note that the algorithm utilizes two $q$ estimates for each agent $k$, a biased one parameterized by $\theta^k$ (which we denote $q_{\theta^k}$) and an unbiased one parameterized by $\omega^k$ (which we denote $q_{\omega^k}$). We clarify that in the listing of LTQL $+\hspace{-1.0mm}=$ is the \textit{accumulate and add} operator and that we used a constant step-size, however this can be replaced with decaying step-sizes or other schemes such as \textit{AdaGrad} \cite{adagrad} or \textit{Adam} \cite{adam}. Note that the target of the unbiased network is used to calculate the target values for both functions; this prevents the bias in the estimates $q_{\theta^k}$ (which arises due to the $c_2$ condition)\footnote{We refer to the condition of the first \textbf{if} statement (i.e., $a^{n}=\argmax_{a^n}q_{\theta_T^{k}}(s,a^n)\hspace{1mm}\forall\hspace{-0.3mm}n\hspace{-0.5mm}\neq\hspace{-0.5mm}k$) as $c_1$, and the condition corresponding to the second \textbf{if} statement as $c_2$.} from propagating through bootstrapping. The target parameters of the biased estimates ($\theta_T^k$) are used solely to evaluate condition $c_1$. We have found that this stabilizes the training of the networks, as opposed to just using $\theta^k$. Hyperparameter $\alpha$ weights samples that satisfy condition $c_2$ differently from those who satisfy $c_1$. As we remarked in the introduction, LTQL reduces to \textit{DQN} for the case where there is a unique agent. In appendix 6.4 we include the tabular version of the algorithm along with a brief discussion.

\begin{algorithm}[t]
	\caption{\textit{Logical Team Q-Learning}}
	\label{algorithm:deep_ltq}
	\begin{algorithmic}
		\STATE{\bfseries Initialize:} an empty replay buffer $\mathcal{R}$, parameters $\theta^k$ and $\omega^k$ and their corresponding targets $\theta_T^k$ and $\omega_T^k$ for all agents $k\in\mathcal{K}$.
		\FOR{iterations $e=0,\ldots,E$}
		\STATE Sample $T$ transitions $(s,\bar{a},r,s^\prime)$ by following some behavior policy which guarantees all joint actions are sampled with non-zero probability and store them in $\mathcal{R}$.
		\FOR{iterations $i=0,\ldots,I$}
		\STATE Sample a mini-batch of $B$ transitions $(s,\bar{a},r,s^\prime)$ from $\mathcal{R}$.
		\STATE Set $\Delta_{\theta^k}=0$ and $\Delta_{\omega^k}=0$ for all agents $k$.
		\FOR{each transition of the mini-batch $b=1,\cdots,B$ and each agent $k=1,\cdots,K$}
		\IF{$a^{n}=\argmax_{a^n}q_{\theta_T^{k}}(s,a^n)\hspace{1mm}\forall\hspace{-0.3mm}n\hspace{-0.5mm}\neq\hspace{-0.5mm}k$}
		\STATE $\Delta_{\theta^{k}}+\hspace{-0.5mm}=\hspace{-0.5mm}\nabla_{\theta^{k}}\big(r+\max\limits_{a}q_{\omega_T^{k}}(s^{\prime}\hspace{-0.5mm},a)-q_{\theta^{k}}(s,a^k)\big)$
		\STATE $\Delta_{\omega^{k}}\hspace{-0.5mm}+\hspace{-0.5mm}=\hspace{-0.5mm}\nabla_{\omega^{k}}\big(r+\max\limits_{a}q_{\omega_T^{k}}(s^{\prime}\hspace{-0.5mm},a)-q_{\omega^{k}}(s,a^k)\big)$
		\ELSIF{$\big(r+\max\limits_{a}q_{\theta_T^{k}}(s^{\prime},a)>q_{\theta^{k}}(s,a^k)\big)$}
		\STATE $\Delta_{\theta^{k}}\hspace{-0.5mm}+\hspace{-0.5mm}=\hspace{-0.5mm}\alpha\hspace{-0.5mm}\nabla_{\theta^{k}}\hspace{-0.5mm}\big(r+\max\limits_{a}q_{\omega_T^{k}}(s^{\prime}\hspace{-0.5mm},a)-q_{\theta^{k}}(s,a^k)\big)$
		\ENDIF
		\ENDFOR
		\STATE $\theta^k+\hspace{-0.5mm}=\mu\Delta_{\theta^k}\hspace{10mm}\omega^k+\hspace{-0.5mm}=\mu\Delta_{\omega^k}$
		\ENDFOR
		\STATE Update targets $\theta_T^k=\theta^k$ and $\omega_T^k=\omega^k$.
		\ENDFOR
	\end{algorithmic}
\end{algorithm}

Note that LTQL works off-policy and there is no necessity of synchronization for exploration. Therefore in applications where agents have access to the global state and can perceive the actions of all other agents (so that they can evaluate $c_1$), it can be implemented in a fully decentralized manner. Interestingly, if condition $c_1$ was omitted (to eliminate the requirement that agents have access to all this information), the resulting algorithm is exactly DistQ \cite{lauer2000riedmiller}. However, as the proof of lemma \ref{lemma:over_estimate} indicates, the resulting algorithm would only converge in situations where it could be guaranteed that during learning, overestimation of the $q$ values is not possible (i.e., the tabular setting applied to deterministic environments; this remark was already made in \cite{lauer2000riedmiller}). In the case where this condition could not be guaranteed (i.e., when using function approximation and/or stochastic environments), some mechanism to decrease overestimated $q$ values would be necessary, as this is the main task of the updates due to $c_1$. One possible way to do this would be to use all transitions to update the $q$ estimates but use a smaller step-size for the ones that do not satisfy $c_2$. Notice that the resulting algorithm would be exactly HystQ \cite{matignon2007hysteretic}.

Notice that the listing of LTQL relies on global states $s$ as opposed to local agent observations. Therefore in its current form the algorithm is only applicable to the first scenario described in the introduction, in which agents have access to the global state both during training and execution. For the second scenario, in which during execution agents rely on their local observation we make the usual approximation $q^k(\mathcal{H}^k,a^k)\approx q^k(s,a^k)$ where $\mathcal{H}^k$ is the action-observation history of agent $k$. Hence, to adapt algorithm \ref{algorithm:deep_ltq} to this second scenario all that is necessary is to replace $q_{\theta^{k}}(s,a^k)$ for $q_{\theta^{k}}(\mathcal{H}^k,a^k)$ (and similarly for $\omega^{k}$, $\theta_T^{k}$ and $\omega_T^{k}$), and the observation histories need to be stored in the replay buffer as well. In practice recurrent architectures (like the \textit{Long Short Term Memory} (LSTM) \cite{hochreiter1997long}) can be used to parameterize $q_{\theta^{k}}(\mathcal{H}^k,a^k)$ as is done in \textit{Deep recurrent Q-Network (DRQN)} \cite{drqn}.
\begin{figure*}[!t]
	\vspace{.3in}
	\begin{center}
		\begin{minipage}{.32\linewidth}
			\centering
			\hspace{-5mm}
			\begin{subfigure}{\textwidth}
				\vspace{.4in}
				\hspace{5mm}
				\begin{tabular}{*{5}{c|}}
					\multicolumn{1}{c}{} & \multicolumn{1}{c}{} &\multicolumn{3}{c}{Agent $2$}\\\cline{3-5}
					\multicolumn{1}{c}{\multirow{3}*{\begin{turn}{-90}Agent 1\end{turn}}}  &  & $a_1$ & $a_2$ & $a_3$ \\\cline{2-5}
					& $b_1$ & $0$ & $2$ & $0$ \\\cline{2-5}
					& $b_2$ & $0$ & $1$ & $2$\\\cline{2-5}
				\end{tabular}
				\vspace{.45in}
				\subcaption{Experiment 1}
				\label{fig:payoff}
			\end{subfigure}
		\end{minipage}
		\hspace{-10mm}
		\begin{minipage}{.32\textwidth}
			\centering
			\begin{subfigure}{\textwidth}
				\includegraphics[width=\textwidth]{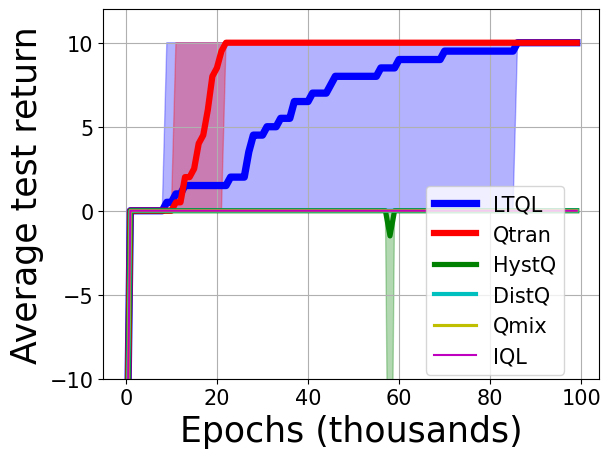}
				\subcaption{Experiment 2}
				\label{fig:tmdp_results}
			\end{subfigure}
		\end{minipage}
		\hspace{10mm}
		\begin{minipage}{.32\textwidth}
			\centering
			\begin{subfigure}{\textwidth}
				\includegraphics[width=\textwidth]{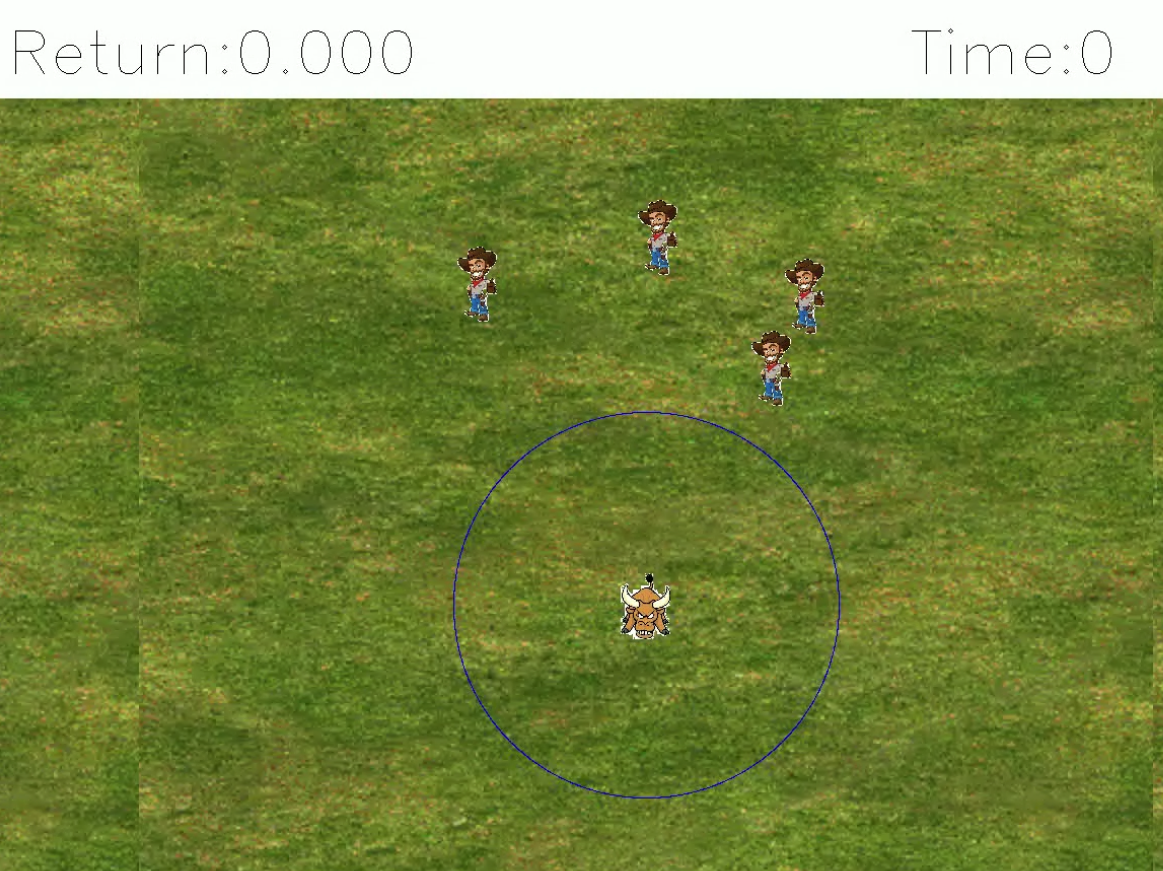}
				\subcaption{Experiment 3}
				\label{fig:cowboy_game}
			\end{subfigure}
		\end{minipage}
		\begin{minipage}{.32\textwidth}
			\centering
			\begin{subfigure}{\textwidth}
				\includegraphics[width=\textwidth]{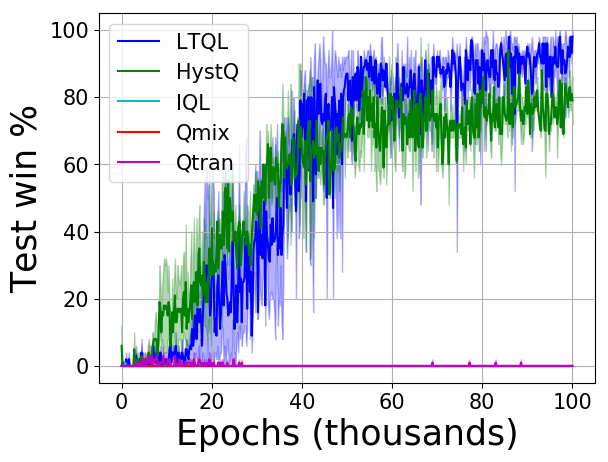}
				\subcaption{Experiment 3}
				\label{fig:cowboy_bull_test_win_rate}
			\end{subfigure}
		\end{minipage}
		\hspace{10mm}
		\begin{minipage}{.32\textwidth}
			\centering
			\begin{subfigure}{\textwidth}
				\includegraphics[width=\textwidth]{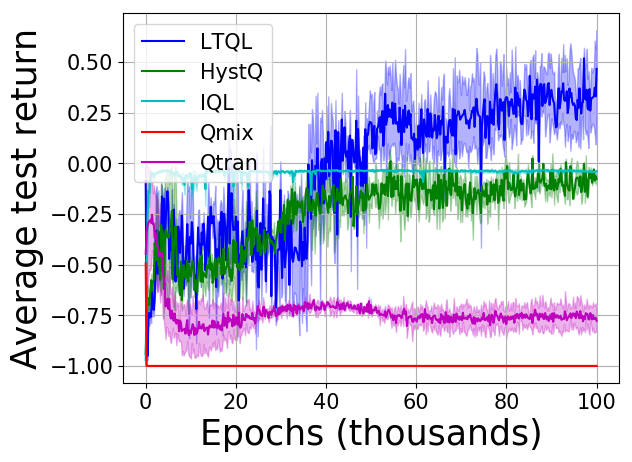}
				\subcaption{Experiment 3}
				\label{fig:cowboy_bull_returns}
			\end{subfigure}
		\end{minipage}
	\end{center}
	\vspace{.3in}
	\caption{The dark curves show the mean over all seeds while the shaded region show the minimum and maximum values. We clarify that in figure \ref{fig:tmdp_results} the curves corresponding to HystQ and DistQ, which are partially occluded, converge to 0.}
	\label{fig:matrix_game}
\end{figure*}
\section{EXPERIMENTS}
\subsection{Matrix Game}\label{subsec:matrix_game}
The first experiment is a simple matrix game (figure \ref{fig:payoff} shows the payoff structure) with multiple team optimum policies to evaluate the resilience of the algorithm to the coordination issue mentioned in section \ref{section:dynamic_programming}. In this case, we implemented IQL, DistQ, LTQL, Qmix and Qtran (we do not include a curve labeled HystQ because in deterministic environments with tabular representation the optimum value for the small step-size is $0$, in which case it becomes exactly equivalent to DistQ). All algorithms are implemented in tabular form.\footnote{In the case of Qmix we used tabular representations for the individual $q$ functions and a NN for the mixing network.} In all cases we used uniform exploratory policies ($\epsilon=1$) and we did not use replay buffers. IQL fails at this task and oscillates due to the perceived time-varying environment (see figure 2 in appendix 6.5). DistQ converges to \eqref{eq:distq_matrix_1}-\eqref{eq:distq_matrix_2}, which clearly shows why DistQ has a coordination issue. However, LTQL converges to either of the two possible solutions \eqref{eq:logical_matrix_1} or \eqref{eq:logical_matrix_2} (depending on the seed) for which individual greedy policies result in team optimal policies. Qmix fails at identifying an optimum team policy and the resulting joint $q$-function obtained using the mixing network also fails at predicting the rewards. Qmix converges to \eqref{eq:qmix_matrix}. The joint $q$-function is shown in appendix 6.5. And Qtran also oscillates due to the fact that in this matrix game there are two jointly optimal actions. In appendix 6.5 we include the learning curves of all algorithms for the readers reference along with a brief discussion.
\begin{align}
&q^1(a^1)=\max_{a^2}q^\dagger(a^1,a^2)=[2,2]\label{eq:distq_matrix_1}\\
&q^2(a^2)=\max_{a^1}q^\dagger(a^1,a^2)=[0,2,2]\label{eq:distq_matrix_2}\\
&q^{1,\star}(a^1)=[2,1]\hspace{4mm}q^{2,\star}(a^2)=[0,2,0]\label{eq:logical_matrix_1}\\
&q^{1,\star}(a^1)=[0,2]\hspace{4mm}q^{2,\star}(a^2)=[0,1,2]\label{eq:logical_matrix_2}\\
&q^{1}(a^1)=[-0.7,1.1]\hspace{4mm}q^{2}(a^2)=[-3.5,1.8,0.6]\label{eq:qmix_matrix}
\end{align}

\subsection{Stochastic Finite Environment}
In this experiment we use a tabular representation in an environment that is stochastic and episodic. The environment is a linear grid with 4 positions and 2 agents. At the beginning of the episode, the agents are initialized in the far right. Agent 1 cannot move and has 2 actions (\textit{push button} or \textit{not push}), while agent 2 has 3 actions (\textit{stay}, \textit{move left} or \textit{move right}). If agent 2 is located in the far left and chooses to \textit{stay} while agent 2 chooses \textit{push}, the team receives a $+10$ reward. If the button is pushed while agent 2 is moving left the team receives a $-30$ reward. This negative reward is also obtained if agent 2 stays still in the leftmost position and agent 1 does not push the button. All rewards are subject to additive Gaussian noise with mean 0 and standard deviation equal to 1. Furthermore if agent 2 tries to move beyond an edge (left or right), it stays in place and the team receives a Gaussian reward with $0$ mean and standard deviation equal to $3$. The episode finishes after 5 timesteps or if the team gets the $+10$ reward (whichever happens first). We ran the simulation $20$ times with different seeds. Figure \ref{fig:tmdp_results} shows the average test return\footnote{The average test return is the return following a greedy policy averaged over $50$ games.} (without the added noise) of IQL, LTQL, HystQ, DistQ, Qmix and Qtran. As can be seen, LTQL and Qtran are the only algorithms capable of learning optimal team policies. In appendix 6.6 we specify the hyperparameters and include the learning curves of the $Q$-functions along with a discussion on the performance of each algorithm.

\subsection{Cowboy Bull Game}\label{subsec:cowboy_bull}
In this experiment we use a more complex environment, a challenging predator-prey type game with partial observability, in which 4 cowboys try to catch a bull (see figure \ref{fig:cowboy_game}). The position of all players is a continuous variable (and hence the state space is continuous). The space is unbounded and the bull can move $20\%$ faster than the cowboys. The bull follows a fixed stochastic policy, which is handcrafted to mimic natural behavior and evade capture. Due to the unbounded space and the fact that the bull moves faster than the cowboys, it cannot be captured unless all agents develop a coordinated strategy (the bull can only be caught if the agents first surround it and then evenly close in). The task is episodic and ends after 75 timesteps or when the bull is caught. Each agent has 5 actions (the four moves plus \textit{stay}). When the bull is caught a $+1$ reward is obtained and the team also receives a small penalty ($-1/(4\hspace{-0.3mm}\times\hspace{-0.4mm}75)$) for every agent that moves. Note that due to the reward structure there is a very easily attainable Nash equilibrium, which is for every agent to stay still (since in this way they do not incur in the penalties associated with movement). Figure \ref{fig:cowboy_bull_test_win_rate} shows the test win percentage\footnote{Percentage of games, out of 50, in which the team succeeds to catch the bull following a greedy policy.} and figure \ref{fig:cowboy_bull_returns} shows the average test return for IQL, LTQL, HystQ, Qmix and Qtran. The best performing algorithm is LTQL. HystQ learns a policy that catches the bull $80\%$ of the times, although it fails at obtaining returns higher than zero. IQL fails because the agents quickly converge to the policy of never moving (to avoid incurring in the negative rewards associated with movement). We believe that the poor performance of Qmix in this task is a consequence of its limited representation capacity due to its monotonic factoring model. Qtran fails in this complex scenario, which is in agreement with results reported in \cite{rashid2020monotonic} where Qtran also shows poor performance in the \textit{Starcraft multi-agent challenge} (SMAC) \cite{smac}. In the appendix we provide all hyperparameters and implementation details, we detail the bull's policy and the observation function. All code\footnote{Code is also available at https://github.com/lcassano/Logical-Team-Q-Learning-paper.}, a pre-trained model and a video of the policy learned by LTQL are included as supplementary material.

\section{CONCLUDING REMARKS}
In this article we have introduced \textit{Logical Team Q-Learning}, which has the $5$ desirable properties mentioned in the introduction. LTQL does not impose constraints on the learned individual $Q$-functions and hence it can solve environments where state of the art algorithms like Qmix and Qtran fail. The algorithm fits in the centralized training and decentralized execution paradigm. It can also be implemented in a fully distributed manner in situations where all agents have access to each others' observations and actions.

\bibliography{refs}

\newpage
\onecolumn
\section*{Appendix}
\subsection{Proof of proposition \ref{proposition:nash}}\label{app:nash}
Consider the matrix game with two agents, each of which has two actions ($\mathcal{A}=\{\alpha;\beta\}$) and the following reward structure:
\begin{table}[H]
	\setlength{\extrarowheight}{2pt}
	\centering
	\caption*{Reward structure}
	\begin{tabular}{*{4}{c|}}
		\multicolumn{2}{c}{} & \multicolumn{2}{c}{Agent $2$}\\\cline{3-4}
		\multicolumn{1}{c}{} &  & $\alpha$  & $\beta$ \\\cline{2-4}
		\multirow{2}*{Agent $1$}  & $\alpha$ & $0$ & $-1$ \\\cline{2-4}
		& $\beta$ & $-1$ & $1$ \\\cline{2-4}
	\end{tabular}
\end{table}

For this case $q^\dagger$, $\pi^\dagger$, $q^{1,\star}$, $q^{2,\star}$, $\pi^{1,\star}$ and $\pi^{2,\star}$ are given by:
\begin{table}[H]
	\begin{minipage}{.2\linewidth}
		\setlength{\extrarowheight}{2pt}
		\centering
		\caption*{$q^\dagger(a^1,a^2)$}
		\begin{tabular}{|c|c|c|}
			\cline{1-3}
			& $\alpha$  & $\beta$ \\\cline{1-3}
			$\alpha$ & $0$ & $-1$ \\\cline{1-3}
			$\beta$ & $-1$ & $1$ \\\cline{1-3}
		\end{tabular}
	\end{minipage}%
	\begin{minipage}{.2\linewidth}
		\setlength{\extrarowheight}{2pt}
		\centering
		\caption*{$\pi^\dagger(a^1,a^2)$}
		\begin{tabular}{|c|c|c|}
			\cline{1-3}
			& $\alpha$  & $\beta$ \\\cline{1-3}
			$\alpha$ & $0$ & $0$ \\\cline{1-3}
			$\beta$ & $0$ & $1$ \\\cline{1-3}
		\end{tabular}
	\end{minipage}
	\begin{minipage}{.14\linewidth}
		\setlength{\extrarowheight}{2pt}
		\centering
		\caption*{$q^{1,\star}(a)$}
		\begin{tabular}{|c|c|}
			\cline{1-2}
			$\alpha$  & $\beta$ \\\cline{1-2}
			$-1$ & $1$ \\\cline{1-2}
		\end{tabular}
	\end{minipage}
	\begin{minipage}{.14\linewidth}
		\setlength{\extrarowheight}{2pt}
		\centering
		\caption*{$q^{2,\star}(a)$}
		\begin{tabular}{|c|c|}
			\cline{1-2}
			$\alpha$  & $\beta$ \\\cline{1-2}
			$-1$ & $1$ \\\cline{1-2}
		\end{tabular}
	\end{minipage}
	\begin{minipage}{.14\linewidth}
		\setlength{\extrarowheight}{2pt}
		\centering
		\caption*{$\pi^{1,\star}(a)$}
		\begin{tabular}{|c|c|}
			\cline{1-2}
			$\alpha$  & $\beta$ \\\cline{1-2}
			$0$ & $1$ \\\cline{1-2}
		\end{tabular}
	\end{minipage}
	\begin{minipage}{.14\linewidth}
		\setlength{\extrarowheight}{2pt}
		\centering
		\caption*{$\pi^{2,\star}(a)$}
		\begin{tabular}{|c|c|}
			\cline{1-2}
			$\alpha$  & $\beta$ \\\cline{1-2}
			$0$ & $1$ \\\cline{1-2}
		\end{tabular}
	\end{minipage}
\end{table}
Notice that as expected, $q^\star$ satisfies \eqref{eq:optimal_fixed_point_1}. However, note that \eqref{eq:optimal_fixed_point_1} is also satisfied by the following $q$ function which is different from $q^\star$.

\begin{align}
&q(a=\alpha)=0,\hspace{5mm}q(a=\beta)=-1
\end{align}
Notice further that the team policy obtained by choosing actions in a greedy fashion with respect to $q$ constitutes a sub-optimal Nash equilibrium. 

\subsection{Proof of Lemma \ref{lemma:over_estimate}}\label{app:over_estimate}
We start defining:
\begin{align}
&q_U=\max\{r_{\textrm{max}}(1-\gamma)^{-1},\max_{k,s,a^k}q^k(s,a^k)\}\\
&q_U^k(s,a^k)=q_U
\end{align}
where $r_{\max}=\max_{s,\bar{a}}r(s,\bar{a})$. We recall that to simplify notation we defined $\Ex_{\mathcal{P},f}\bs{r}(s,a^k,a^{-k},\bs{s'})=r(s,a^k,a^{-k})$. The first part of the proof consists in upper bounding any sequence of the form $\mathcal{B}_{I}^{K_\ell}\mathcal{B}_{E}^{K_{n-1}}\cdots\mathcal{B}_E^{K_1}\mathcal{B}_I^{K_0}q_U^{k}(s,a^k)$, where $K_\ell\in\mathbb{N}$ for all $\ell$. Applying operator $\mathcal{B}_{I}$ to $q_U^{k}(s,a^k)$ we get:
\begin{align}
\mathcal{B}_{I}q_U^{k}(s,a^k)&=\max\big\{q_U,\max_{a^{-k}}r(s,a^k,a^{-k})+\gamma q_U\big\}=q_U
\end{align}
Therefore, $\mathcal{B}_{I}^{K_0}q_U^{k}(s,a^k)=q_U^{k}(s,a^k)$ for any $K_0\in\mathbb{N}$. Applying operator $\mathcal{B}_{E}$ we get:
\begin{align}
\mathcal{B}_{E}q_U^{k}(s,a^k)&=\Ex_{\bs{s'}\sim\mathcal{P}}\big(r(s,a^k,a^{-k})+\gamma q_{U}\big)\big|_{a^{n}=\argmax\limits_{a^n}q_U\hspace{2mm}\forall n\neq k}\leq\max_{a^{-k}}r(s,a^k,a^{-k})+\gamma q_{U}\\
\mathcal{B}_{E}^2q_U^{k}(s,a^k)&\leq\Ex_{\bs{s'}\sim\mathcal{P}}\big(r(s,a^k,a^{-k})\hspace{-0.7mm}+\hspace{-0.7mm}\gamma\max_{\bar{a}'}r(\bs{s'},\bar{a}')\hspace{-0.7mm}+\hspace{-0.7mm}\gamma^2 q_{U}\big)\big|_{a^{n}=\argmax\limits_{a^n}\mathcal{B}_{E}q_U^{n}(s,a^n)\hspace{2mm}\forall n\neq k}\nonumber\\
&\leq\max_{a^{-k}}\Ex_{\bs{s'}\sim\mathcal{P}}\left(r(s,a^k,a^{-k})+\gamma\max_{\bar{a}'}r(\bs{s'},\bar{a}')+\gamma^2q_{U}\right)\\
\mathcal{B}_{E}^{K_1}q_U^{k}(s,a^k)&\leq\max_{a_0^{-k},\bar{a}_1,\cdots,\bar{a}_{K_1-1}}\Ex\left(\sum_{i=0}^{K_1-1}\gamma^ir(\bs{s}_i,a_i^k,a_i^{-k})|\bs{s}_0=s\right)+\gamma^{K_1}q_{U}
\end{align}
Further application of $\mathcal{B}_{I}$ we get:
\begin{align}
\mathcal{B}_{I}\mathcal{B}_{E}^{K_1}q_U^{k}(s,a^k)&\leq\max\bigg\{\max_{a_0^{-k},\bar{a}_1,\cdots,\bar{a}_{K_1-1}}\Ex\left(\sum_{i=0}^{K_1-1}\gamma^ir(\bs{s}_i,a_i^k,a_i^{-k})|\bs{s}_0=s\right)+\gamma^{K_1}q_{U},\nonumber\\
&\hspace{13mm}\max_{a_0^{-k},\bar{a}_1,\cdots,\bar{a}_{K_1}}\Ex\left(\sum_{i=0}^{K_1}\gamma^ir(\bs{s}_i,a_i^k,a_i^{-k})|\bs{s}_0=s\right)+\gamma^{K_1+1}q_{U}\bigg\}\nonumber\\
&=\max_{a_0^{-k},\bar{a}_1,\cdots,\bar{a}_{K_1-1}}\Ex\left(\sum_{i=0}^{K_1-1}\gamma^ir(\bs{s}_i,a_i^k,a_i^{-k})|\bs{s}_0=s\right)+\gamma^{K_1}q_{U}\label{eq:p1}
\end{align}
Therefore, we conclude that $\mathcal{B}_{I}^{K_2}\mathcal{B}_{E}^{K_1}\mathcal{B}_{I}^{K_0}q_U^{k}(s,a^k)=\mathcal{B}_{E}^{K_1}q_U^{k}(s,a^k)$. More generally, we can write:
\begin{align}
\mathcal{B}_{I}^{K_\ell}\cdots\mathcal{B}_E^{K_1}\mathcal{B}_I^{K_0}q_U^{k}(s,a^k)&=\mathcal{B}_{E}^nq_U^{k}(s,a^k)\leq\max_{a_0^k,\bar{a}_1,\cdots,\bar{a}_{n-1}}\Ex\left(\sum_{i=0}^{\bs{n}-1}\gamma^ir(\bs{s}_i,a_i^k,a_i^{-k})|\bs{s}_0=s\right)+\gamma^{\bs{n}} q_{U}\nonumber\\
&\leq\max_{a^{-k}}q^\dagger(s,a^k,a^{-k})+\gamma^{\bs{n}}\big(q_{U}-\min_{s}\max_{\bar{a}}q^\dagger(s,\bar{a})\big)
\end{align}
where we define $\bs{n}$ to be the total number of times that operator $\mathcal{B}_{E}$ is applied. Notice that if operator $\mathcal{B}_p$ is applied $N$ times, $\bs{n}$ is a random variable that follows a binomial distribution with total samples $N$ and probability $p$. Therefore, we get:
\begin{align}
\mathcal{B}_p^Nq_U^{k}(s,a^k)&\leq\max_{a^{-k}}q^\dagger(s,a^k,a^{-k})+\gamma^{\bs{n}}\big(q_{U}-\min_{s}\max_{\bar{a}}q^\dagger(s,\bar{a})\big)
\end{align}
To ensure that $\mathcal{B}_p^Nq_U^{k}(s,a^k)\in\mathcal{C}_{\delta_1}^U$ we need:
\begin{align}
	&\delta_1\geq \gamma^{\bs{n}}\big(q_{U}-\min_{s}\max_{\bar{a}}q^\dagger(s,\bar{a})\big)\hspace{5mm}\rightarrow\hspace{5mm}{\bs{n}}\geq \log_\gamma\left(\frac{\delta_1}{q_{U}-\min_{s}\max_{\bar{a}}q^\dagger(s,\bar{a})}\right)
\end{align}
Since $n$ follows a binomial distribution we get:
\begin{align}
	&\mathbb{P}\Bigg({\bs{n}}\geq \underbrace{\floor[\bigg]{\log_\gamma\left(\frac{\delta_1}{q_{U}-\min_{s}\max_{\bar{a}}q^\dagger(s,\bar{a})}\right)}}_{\define n_o}\Bigg)=1-\sum_{n=0}^{n_o}{{N}\choose{n}}p^n(1-p)^{N-n}
\end{align}
Therefore we can conclude that:
\begin{align}
&\mathbb{P}\big(\mathcal{B}_p^Nq_U^{k}(s,a^k)\in\mathcal{C}_{\delta_1}^U\big)\geq1-\sum_{n=0}^{n_o}{{N}\choose{n}}p^n(1-p)^{N-n}\\
&\mathcal{C}_{\delta_1}^U=\big\{q^{k}|q^{k}(s,a^k)\leq\max_{a^{-k}}q^\dagger(s,a^k,a^{-k})+\delta_1\hspace{5mm}\forall (k,s,a^k)\hspace{-1mm}\in\hspace{-1mm}(\mathcal{K},\mathcal{S},\mathcal{A}^{k})\big\}
\end{align}
Noting that by construction $\mathcal{B}_p^N q_U^k(s,a^k)\geq\mathcal{B}_p^Nq^k(s,a^k)$ for all $N\geq1$, we get that: 
\begin{align}
&\mathbb{P}\big(\mathcal{B}_p^Nq^{k}(s,a^k)\in\mathcal{C}_{\delta_1}^U\big)\geq1-\sum_{n=0}^{n_o}{{N}\choose{n}}p^n(1-p)^{N-n}
\end{align}
For the special case where $n_o<pN$ we can bound the cumulative distribution function of the binomial distribution using Hoeffding's bound:
\begin{align}
&\mathbb{P}\big(\mathcal{B}_p^Nq^{k}(s,a^k)\in\mathcal{C}_{\delta_1}^U\big)\geq 1-e^{-2N\left(p-\frac{n_o}{N}\right)^2}
\end{align}
which concludes the proof. 

\subsection{Proof of Lemma \ref{lemma:approach}}\label{app:lemma_lower_bound}
We start stating the following auxiliary lemma.
\begin{lemma}\label{lemma:c0u}
	If $q^{k}(s,a^k)\in\mathcal{C}_0^U$ then it holds that $\mathcal{B}_p^Nq^{k}(s,a^k)\in\mathcal{C}_0^U$ for all $N\geq0$.
\end{lemma}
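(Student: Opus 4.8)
The plan is to prove that $\mathcal{C}_0^U$ is forward-invariant under \emph{both} $\mathcal{B}_E$ and $\mathcal{B}_I$ separately, and then to exploit the fact that on each iteration $\mathcal{B}_p$ equals one of these two operators regardless of the outcome of its coin flip. Invariance under each operator therefore gives invariance under $\mathcal{B}_p$ along every realization of the random choices, and a one-line induction on $N$ then yields $\mathcal{B}_p^Nq^{k}(s,a^k)\in\mathcal{C}_0^U$ for all $N\geq0$. Throughout I abbreviate the defining upper bound by $Q^{k}(s,a^k)\define\max_{a^{-k}}q^\dagger(s,a^k,a^{-k})$, so that $q^{k}\in\mathcal{C}_0^U$ means $q^{k}(s,a^k)\leq Q^{k}(s,a^k)$ for every $(k,s,a^k)$.

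First I would record two elementary facts. Maximizing $Q^{k}$ over $a^k$ gives $\max_{a^k}Q^{k}(s,a^k)=\max_{\bar{a}}q^\dagger(s,\bar{a})$, so whenever the $k$-th inequality of $\mathcal{C}_0^U$ holds we obtain the bootstrap bound $\max_{a'}q^{k}(\bs{s'},a')\leq\max_{a'}Q^{k}(\bs{s'},a')=\max_{\bar{a}'}q^\dagger(\bs{s'},\bar{a}')$. This dominance of the bootstrapped maximum, together with the Bellman optimality equation \eqref{eq:bellman_opt} for $q^\dagger$, is the single mechanism driving both invariance arguments.

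For $\mathcal{B}_I$ I would bound the two arguments of its outer maximum separately. The first argument $q^{k}(s,a^k)$ is $\leq Q^{k}(s,a^k)$ by hypothesis. For the second, substituting the larger $\max_{\bar{a}'}q^\dagger(\bs{s'},\bar{a}')$ for $\max_{a'}q^{k}(\bs{s'},a')$ produces $\max_{a^{-k}}\big(r(s,a^k,a^{-k})+\gamma\Ex\max_{\bar{a}'}q^\dagger(\bs{s'},\bar{a}')\big)$, which by \eqref{eq:bellman_opt} equals $\max_{a^{-k}}q^\dagger(s,a^k,a^{-k})=Q^{k}(s,a^k)$. Both arguments are thus $\leq Q^{k}(s,a^k)$, so $\mathcal{B}_I q^{k}\leq Q^{k}$ and membership is preserved. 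For $\mathcal{B}_E$, the companions' actions are pinned to $a^{n}=\argmax_{a^n}q^{n}(s,a^n)$ by \eqref{eq:Be}; writing the resulting tuple as $\hat{a}^{-k}$, the identical substitution gives $\mathcal{B}_E q^{k}(s,a^k)\leq r(s,a^k,\hat{a}^{-k})+\gamma\Ex\max_{\bar{a}'}q^\dagger(\bs{s'},\bar{a}')=q^\dagger(s,a^k,\hat{a}^{-k})\leq\max_{a^{-k}}q^\dagger(s,a^k,a^{-k})=Q^{k}(s,a^k)$, so $\mathcal{B}_E$ also preserves $\mathcal{C}_0^U$.

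The one point requiring care — which I would flag as the main obstacle — is that the upper bound must survive the bootstrapping step, where $q^{k}$ reappears inside the expectation; this is precisely what the first elementary fact handles, after which the Bellman equation collapses the bound back to $Q^{k}$. I would also stress that the particular $\hat{a}^{-k}$ selected by the $c_1$ argmax condition is irrelevant, since we upper-bound by the maximum over all $a^{-k}$. Consequently, for the update of agent $k$ only the $k$-th inequality of $\mathcal{C}_0^U$ is used, and no further assumption on the companion functions $q^{n}$, $n\neq k$, is needed.
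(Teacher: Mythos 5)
Your proposal is correct and follows essentially the same route as the paper's own proof: you establish that $\mathcal{C}_0^U$ is preserved by $\mathcal{B}_I$ and $\mathcal{B}_E$ separately, using the bound $\max_{a'}q^{k}(\bs{s'},a')\leq\max_{\bar{a}'}q^\dagger(\bs{s'},\bar{a}')$ together with the Bellman optimality equation \eqref{eq:bellman_opt}, and then conclude by induction since $\mathcal{B}_p$ always realizes one of the two operators. The only cosmetic difference is that in the $\mathcal{B}_E$ step you substitute $q^\dagger$ before maximizing over $a^{-k}$ while the paper does the reverse; the argument is the same.
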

\begin{proof}
	We start noting that if $q^{k}(s,a^k)\in\mathcal{C}_0^U$ then $\mathcal{B}_Eq^{k}(s,a^k)\in\mathcal{C}_0^U$ and $\mathcal{B}_Iq^{k}(s,a^k)\in\mathcal{C}_0^U$.
	\begin{align}
	\mathcal{B}_{I}q^{k}(s,a^k)&=\max\big\{q^{k}(s,a^k),\max_{a^{-k}}\Ex(r(s,a^k,a^{-k},\bs{s'})+\gamma \max_{a^{\prime,k}}q^k(\bs{s'},a^{\prime,k}))\big\}\nonumber\\
	&\stackrel{(a)}{\leq}\max\big\{\max_{a^{-k}}q^{\dagger}(s,a^k),\max_{a^{-k}}\Ex(r(s,a^k,a^{-k},\bs{s'})+\gamma \max_{\bar{a}^{\prime}}q^\dagger(\bs{s'},\bar{a}^{\prime}))\big\}=\max_{a^{-k}}q^{\dagger}(s,a^k)\\
	\mathcal{B}_{E}q^{k}(s,a^k)&=\Ex\big(\bs{r}(s,a^k,a^{-k},\bs{s'})+\gamma\max_{a^{\prime}}q^{k}(\bs{s'},a^{\prime})\big)\big|_{a^{n}=\argmax\limits_{a^n}q^{n}(s,a^n)\hspace{1mm}\forall n\neq k}\nonumber\\
	&\leq\max_{a^{-k}}\Ex\big(\bs{r}(s,a^k,a^{-k},\bs{s'})+\gamma\max_{a^{\prime}}q^{k}(\bs{s'},a^{\prime})\big)\nonumber\\
	&\stackrel{(b)}{\leq}\max_{a^{-k}}\Ex\big(\bs{r}(s,a^k,a^{-k},\bs{s'})+\gamma\max_{\bar{a}^{\prime}}q^{\dagger}(\bs{s'},\bar{a}^{\prime})\big)=\max_{a^{-k}}q^{\dagger}(s,a^k)
	\end{align}
	where in $(a)$ and $(b)$ we used the fact that $q^{k}(s,a^k)\in\mathcal{C}_0^U$. Since it holds that $\mathcal{B}_Eq^{k}(s,a^k)\in\mathcal{C}_0^U$ and $\mathcal{B}_Iq^{k}(s,a^k)\in\mathcal{C}_0^U$ it immediately follows that $\mathcal{B}_p^Nq^{k}(s,a^k)\in\mathcal{C}_0^U$ for all $N\geq0$.
\end{proof}
We follow by noting that applying operator $\mathcal{B}_I$ $L$ times to $q^{k}(s,a^k)<\max_{a^{-k}}q^\dagger(s,a^k,a^{-k})$ we get:
\begin{align}
\mathcal{B}_{I}q^{k}(s,a^k)&=\max\big\{q^{k}(s,a^k),\max_{a^{-k}}\Ex(r(s,a^k,a^{-k},\bs{s'})+\gamma \max_{a^{\prime,k}}q^k(\bs{s'},a^{\prime,k}))\big\}\nonumber\\
&\geq\max_{a^{-k}}\Ex\big(r(s,a^k,a^{-k},\bs{s'})+\gamma \max_{a^{\prime,k}}q^k(\bs{s'},a^{\prime,k})\big)\\
\mathcal{B}_{I}^{L}q^{k}(s,a^k)&=\hspace{-3mm}\max_{a_0^{-k},\bar{a}_1,\cdots,\bar{a}_{L-1}}\hspace{-3mm}\Ex\left(\sum_{i=0}^{L-1}\gamma^ir(\bs{s}_i,a_i^k,a_i^{-k})+\gamma^{L}\max_{a_L^{k}}q^k(\bs{s}_L,a_L^{k})|\bs{s}_0=s,a_0^k=a^k\right)\nonumber\\
\big|\mathcal{B}_{I}^Lq^{k}(s,a^k)-&\max_{a^{-k}}q^\dagger(s,a^k,a^{-k})\big|\stackrel{(c)}{\leq}\gamma^{L}\max_{s}\big|\max_{a^k}q^{k}(s,a^k)-\max_{\bar{a}}q^\dagger(s,\bar{a})\big|\define\epsilon(L)\label{eq:epsilon_L}
\end{align}
where in $(c)$ we used $\mathcal{B}_{I}^{L}q^{k}(s,a^k)<\max_{a^{-k}}q^\dagger(s,a^k,a^{-k})$ from lemma \ref{lemma:c0u}. If $\epsilon(L)<\delta_2$, we get:
\begin{align}
	&\mathcal{B}_{I}^Lq^{k}(s,a^k)\in\mathcal{C}_{\delta_2}\label{eq:epsilon_condition}\\
	&\mathcal{C}_{\delta_2}=\big\{q^{k}|q^{k,\star}(s,a^k)-\delta_2\leq q^{k}(s,a^k)\leq\max_{a^{-k}}q^\dagger(s,a^k,a^{-k})+\delta_2\forall (k,s,a^k)\hspace{-1mm}\in\hspace{-1mm}(\mathcal{K},\mathcal{S},\mathcal{A}^{k})\big\}
\end{align}
\begin{lemma}\label{lemma:c_delta2}
	If $q^{k}(s,a^k)\in\mathcal{C}_{\delta_2}$ and $\delta_2$ is small enough, then it holds that $\mathcal{B}_p^Nq^{k}(s,a^k)\in\mathcal{C}_{\gamma^N\delta_2}$ for all $N>0$.
\end{lemma}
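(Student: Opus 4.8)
The plan is to reduce the statement to a single application of $\mathcal{B}_p$ and then induct on $N$. Concretely, I will show that if $q^{k}$ lies in $\mathcal{C}_{\delta_2}\cap\mathcal{C}_0^U$ then both $\mathcal{B}_E q^{k}$ and $\mathcal{B}_I q^{k}$ lie in $\mathcal{C}_{\gamma\delta_2}\cap\mathcal{C}_0^U$; since $\mathcal{B}_p$ always equals one of these two operators, this gives $\mathcal{B}_p q^{k}\in\mathcal{C}_{\gamma\delta_2}\cap\mathcal{C}_0^U$ for every realization of the coin flip, and iterating (with $\delta_2\to\gamma^{N}\delta_2$, which only shrinks) yields $\mathcal{B}_p^N q^{k}\in\mathcal{C}_{\gamma^N\delta_2}$. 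I flag at the outset the subtle point: $\mathcal{B}_I$ never decreases values, so it cannot by itself contract an overestimation of $\max_{a^{-k}}q^\dagger(s,a^k,a^{-k})$, which is why I carry membership in $\mathcal{C}_0^U$ through the induction and lean on Lemma \ref{lemma:c0u} to control the upper side. This is exactly the regime in which the lemma is applied inside the proof of Lemma \ref{lemma:approach}, where $\mathcal{B}_I^L$ has already driven the iterate into $\mathcal{C}_0^U$.

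First I would fix $\delta_2$ small enough that the separation condition \eqref{eq:delta_cond} holds. Reusing the argument of \eqref{eq:be_opt_main}--\eqref{eq:correct_act_main} from the proof of Theorem \ref{theorem:convergence}, this guarantees that for every $q^{k}\in\mathcal{C}_{\delta_2}$ the greedy action $\argmax_{a^n}q^{n}(s,a^n)$ coincides with the optimal action $\argmax_{a^n}q^{n,\star}(s,a^n)$ for all $n$ and $s$: the optimal action has value at least $\max_{\bar a}q^\dagger(s,\bar a)-\delta_2$, while every other action has value at most $\max_{a^{-k}}q^\dagger(s,a^k,a^{-k})+\delta_2<\max_{\bar a}q^\dagger(s,\bar a)-\delta_2$ by \eqref{eq:delta_cond}. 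This identification of greedy with optimal actions is the only place where smallness of $\delta_2$ enters, and it is what makes the $\mathcal{B}_E$ step collapse to a clean form.

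For the upper bounds, Lemma \ref{lemma:c0u} immediately gives $\mathcal{B}_E q^{k},\mathcal{B}_I q^{k}\in\mathcal{C}_0^U$, i.e. both are bounded above by $\max_{a^{-k}}q^\dagger(s,a^k,a^{-k})\leq\max_{a^{-k}}q^\dagger(s,a^k,a^{-k})+\gamma\delta_2$, so the upper side of $\mathcal{C}_{\gamma\delta_2}$ holds for free. The real work is the lower bound $q^{k,\star}-\gamma\delta_2$. For $\mathcal{B}_E$, the greedy-equals-optimal identity reduces it exactly to \eqref{eq:be_bon_main}; since at the optimal action $q^{k}$ and $q^{k,\star}$ agree to within $\delta_2$ by \eqref{eq:factored_q_1}, the bootstrap correction $\max_{a'}q^{k}(\bs{s'},a')-\max_{a'}q^{k,\star}(\bs{s'},a')$ is at most $\delta_2$ in magnitude, giving $|\mathcal{B}_E q^{k}-q^{k,\star}|\leq\gamma\delta_2$ and in particular $\mathcal{B}_E q^{k}\geq q^{k,\star}-\gamma\delta_2$. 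For $\mathcal{B}_I$, I would drop the first branch of the outer maximum, use $\max_{a'}q^{k}(\bs{s'},a')\geq q^{k,\star}(\bs{s'},a^{\prime,\dagger})-\delta_2=\max_{\bar a'}q^\dagger(\bs{s'},\bar a')-\delta_2$ at an optimal action $a^{\prime,\dagger}$, and conclude $\mathcal{B}_I q^{k}(s,a^k)\geq\max_{a^{-k}}q^\dagger(s,a^k,a^{-k})-\gamma\delta_2\geq q^{k,\star}(s,a^k)-\gamma\delta_2$; note this direction needs only membership in $\mathcal{C}_{\delta_2}$, not the separation condition.

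Combining these, a single application of either operator sends $\mathcal{C}_{\delta_2}\cap\mathcal{C}_0^U$ into $\mathcal{C}_{\gamma\delta_2}\cap\mathcal{C}_0^U$, and a routine induction on $N$ yields the claim. The main obstacle is the asymmetry already flagged: $\mathcal{B}_I$ contracts the lower gap toward $q^{k,\star}$ but does not contract the upper gap, so the proof genuinely relies on the invariance of $\mathcal{C}_0^U$ (Lemma \ref{lemma:c0u}) to suppress overestimation, rather than on a naive two-sided contraction of $\mathcal{C}_{\delta_2}$ by $\gamma$.
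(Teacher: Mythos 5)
Your proof is correct, and it actually does a bit more than the paper's own argument. The paper handles the two branches asymmetrically: it computes the $\mathcal{B}_E$ step exactly as you do (separation condition $\Rightarrow$ greedy $=$ optimal, then the bootstrap telescoping giving $|\mathcal{B}_E q^k - q^{k,\star}|\leq\gamma\delta_2$), but for $\mathcal{B}_I$ it only observes that $\mathcal{C}_{\delta_2}$ is invariant ("trivially follows"), with no contraction. Taken literally, that only yields $\mathcal{B}_p^N q^k\in\mathcal{C}_{\delta_2}$ on realizations containing $\mathcal{B}_I$ steps, not the stated $\mathcal{C}_{\gamma^N\delta_2}$; your version repairs this by (i) proving a genuine one-step lower-gap contraction for $\mathcal{B}_I$ as well, via $\max_{a'}q^k(\bs{s'},a')\geq\max_{\bar a'}q^\dagger(\bs{s'},\bar a')-\delta_2$ from \eqref{eq:factored_q_1}, and (ii) discharging the upper side through the $\mathcal{C}_0^U$-invariance of Lemma \ref{lemma:c0u} rather than through $\mathcal{C}_{\delta_2}$ itself. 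The price is the extra hypothesis $q^k\in\mathcal{C}_0^U$, which is not in the lemma's statement; but this hypothesis is essentially forced (if $q^k$ overestimates $\max_{a^{-k}}q^\dagger(s,a^k,a^{-k})$ by a positive amount, $\mathcal{B}_I$ never reduces it, so the upper bound $\gamma^N\delta_2$ cannot hold along all-$\mathcal{B}_I$ sequences), and it is automatically satisfied in the only place the lemma is invoked, namely inside Lemma \ref{lemma:c_delta2_condition} after $\mathcal{B}_I^L$ has been applied to functions already in $\mathcal{C}_0^U$. So your route buys a statement that is true for every realization of the coin flips at the stated rate, at the cost of making explicit a hypothesis the paper leaves implicit.
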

\begin{proof}
	Assume $\delta_2$ satisfies the following relation:
	\begin{align}\label{eq:delta2_cond}
		\delta_2<2^{-1}\min_{s}\big(\max_{\bar{a}}q^\dagger(s,\bar{a})-\max_{\bar{a}\neq\argmax\limits_{\bar{a}} q^\dagger(s,\bar{a})}q^\dagger(s,\bar{a})\big)
	\end{align}
	We clarify that the term in between parenthesis in the r.h.s. of relation \eqref{eq:delta2_cond} is the difference between the optimal $q$-value and the second highest $q$-value for state $s$. Note that if $q^{k}(s,a^k)\in\mathcal{C}_{\delta_2}$ then it trivially follows that $\mathcal{B}_Iq^{k}(s,a^k)\in\mathcal{C}_{\delta_2}$, for the case of $\mathcal{B}_E$ we get:
	\begin{align}\label{eq:be_opt}
		\mathcal{B}_{E}q^{k}(s,a^k)&=\Ex\big(\bs{r}(s,a^k,a^{-k},\bs{s'})+\gamma\max_{a^{\prime}}q^{k}(\bs{s'},a^{\prime})\big)\big|_{a^{n}=\argmax\limits_{a^n}q^{n}(s,a^n)\hspace{1mm}\forall n\neq k}
	\end{align}
	Using equation \eqref{eq:factored_q_1} and the fact that $q^{k}(s,a^k)\in\mathcal{C}_{\delta_2}$ it follows:
	\begin{align}
	&\max_{a^k}q^{k,\star}(s,\bar{a})-\delta_2\leq \max_{a^k}q^{k}(s,a^k)\\
	&q^{k}(s,a^{k,\bullet})\leq\max_{a^{-k}}q^\dagger(s,a^{k,\bullet},a^{-k})+\delta_2\stackrel{(d)}{<}\max_{\bar{a}}q^\dagger(s,\bar{a})-\delta_2=\max_{a^k}q^{k,\star}(s,\bar{a})-\delta_2\\
	&a^{k,\bullet}=\argmax_{a^k\neq \argmax\limits_{a^k}q^{k}(s,a^k)}q^{k}(s,a^k)\label{eq:correct_act}
	\end{align}
	where in $(d)$ we used condition \eqref{eq:delta2_cond}. Combining equations \eqref{eq:be_opt} through \eqref{eq:correct_act} we get:
	\begin{align}
	\mathcal{B}_{E}&q^{k}(s,a^k)=\Ex\big(\bs{r}(s,a^k,a^{-k},\bs{s'})+\gamma\max_{a^{\prime}}q^{k}(\bs{s'},a^{\prime})\big)\big|_{a^{n}=\argmax\limits_{a^n}q^{n,\star}(s,a^n)\hspace{1mm}\forall n\neq k}\nonumber\\
	&=\Ex\big(\bs{r}(s,a^k,a^{-k},\bs{s'})+\gamma\max_{a^{\prime}}q^{k}(\bs{s'},a^{\prime})+\gamma\max_{a^{\prime}}q^{k,\star}(\bs{s'},a^{\prime})-\gamma\max_{a^{\prime}}q^{k,\star}(\bs{s'},a^{\prime})\big)\big|_{a^{n}=\argmax\limits_{a^n}q^{n,\star}(s,a^n)\hspace{1mm}\forall n\neq k}\nonumber\\
	&=q^{k,\star}(s,a^k)+\gamma\Ex\big(\max_{a^{\prime}}q^{k}(\bs{s'},a^{\prime})-\max_{a^{\prime}}q^{k,\star}(\bs{s'},a^{\prime})\big)\big|_{a^{n}=\argmax\limits_{a^n}q^{n,\star}(s,a^n)\hspace{1mm}\forall n\neq k}\label{eq:be_bon}
	\end{align}
	Combining equation \eqref{eq:be_bon} with the fact that $q^{k}(s,a^k)\in\mathcal{C}_{\delta_2}$ we get:
	\begin{align}
	&q^{k,\star}(s,a^k)-\gamma\delta_2\leq\mathcal{B}_{E}q^{k}(s,a^k)\leq\max_{a^{-k}}q^\dagger(s,a^k,a^{-k})-\gamma\delta_2
	\end{align}
	which completes the proof.
\end{proof}
\begin{lemma}\label{lemma:c_delta2_condition}
	For any $q^{k}(s,a^k)\in\mathcal{C}_{0}^U$, $\delta_2>0$ and $N\geq$, it holds that $\mathcal{B}_p^Nq^{k}(s,a^k)\in\mathcal{C}_{\delta_2}$ as long as the sequence of $N$ operators $\mathcal{B}_p$ includes at least $L$ consecutive $\mathcal{B}_I$'s. Where $L$ is given by:
	\begin{align}\label{eq:L}
		L=\ceil[\Bigg]{\log_\gamma\left(\frac{\delta_2}{\max_{s}\big|\max_{a^k}q^{k}(s,a^k)-\max_{\bar{a}}q^\dagger(s,\bar{a})\big|}\right)}
	\end{align}
\end{lemma}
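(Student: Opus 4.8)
The plan is to decompose the length-$N$ operator sequence into three pieces: a \emph{prefix} of some $m\geq0$ operators, the guaranteed run of $L$ consecutive $\mathcal{B}_I$'s, and a \emph{suffix} of the remaining $N-m-L$ operators, and then to track which invariant set the iterate occupies after each piece using the auxiliary lemmas already proved. For the prefix, Lemma \ref{lemma:c0u} immediately gives that the iterate $\tilde q^{k}\define\mathcal{B}_p^{m}q^{k}$ at the start of the block still lies in $\mathcal{C}_0^U$, so in particular $\tilde q^{k}(s,a^k)\leq\max_{a^{-k}}q^\dagger(s,a^k,a^{-k})$ for every $(k,s,a^k)$.

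Next I would apply the contraction estimate \eqref{eq:epsilon_L} to $\tilde q^{k}$ across the block: since $\tilde q^{k}\in\mathcal{C}_0^U$, applying $\mathcal{B}_I$ exactly $L$ times yields
\[
\big|\mathcal{B}_I^L\tilde q^{k}(s,a^k)-\max_{a^{-k}}q^\dagger(s,a^k,a^{-k})\big|\leq\gamma^{L}\max_{s}\big|\max_{a^k}\tilde q^{k}(s,a^k)-\max_{\bar{a}}q^\dagger(s,\bar{a})\big|.
\]
Choosing $L$ as in \eqref{eq:L} forces $\gamma^{L}\leq\delta_2/\max_{s}|\max_{a^k}q^{k}(s,a^k)-\max_{\bar{a}}q^\dagger(s,\bar{a})|$, so the right-hand side is at most $\delta_2$. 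Combining this with $\mathcal{B}_I^L\tilde q^{k}\in\mathcal{C}_0^U$ (again Lemma \ref{lemma:c0u}) and the elementary inequality $q^{k,\star}(s,a^k)\leq\max_{a^{-k}}q^\dagger(s,a^k,a^{-k})$ establishes both bounds defining $\mathcal{C}_{\delta_2}$: the upper bound because we stay in $\mathcal{C}_0^U$, and the lower bound $q^{k,\star}-\delta_2\leq\mathcal{B}_I^L\tilde q^{k}$ because $\mathcal{B}_I^L\tilde q^{k}\geq\max_{a^{-k}}q^\dagger-\delta_2\geq q^{k,\star}-\delta_2$. Hence the iterate enters $\mathcal{C}_{\delta_2}$ at the end of the block.

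Finally, for the suffix I would invoke Lemma \ref{lemma:c_delta2}: once the iterate lies in $\mathcal{C}_{\delta_2}$ (with $\delta_2$ small enough to satisfy \eqref{eq:delta2_cond}), every further application of $\mathcal{B}_p$ keeps it in $\mathcal{C}_{\gamma^{N'}\delta_2}\subseteq\mathcal{C}_{\delta_2}$, where $N'=N-m-L$, so the terminal iterate $\mathcal{B}_p^N q^{k}$ remains in $\mathcal{C}_{\delta_2}$, as claimed; the requirement $N\geq L$ is automatic since the sequence already contains $L$ operators. If $\delta_2$ fails \eqref{eq:delta2_cond}, I would replace it by a smaller admissible $\delta_2'$ and use $\mathcal{C}_{\delta_2'}\subseteq\mathcal{C}_{\delta_2}$.

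The delicate point, and the step I expect to be the main obstacle, is the transition from the prefix to the block: I must justify that the $L$ computed from the \emph{initial} gap is still large enough even though the run of $\mathcal{B}_I$'s may occur only after $m$ prior operators. This reduces to showing that the optimality gap $\max_{s}\big(\max_{\bar{a}}q^\dagger(s,\bar{a})-\max_{a^k}q^{k}(s,a^k)\big)$ does not increase along the prefix while the iterate stays in $\mathcal{C}_0^U$, so that $\max_{s}|\max_{a^k}\tilde q^{k}-\max_{\bar{a}}q^\dagger|\leq\max_{s}|\max_{a^k}q^{k}-\max_{\bar{a}}q^\dagger|$ and the chosen $L$ remains sufficient. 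For the $\mathcal{B}_I$ operators this is immediate from $\mathcal{B}_Iq^{k}\geq q^{k}$ pointwise, which can only raise $\max_{a^k}q^{k}$; the real work is verifying the same non-increase under $\mathcal{B}_E$ on $\mathcal{C}_0^U$, which is where I would spend the effort.
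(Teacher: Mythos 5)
Your decomposition into prefix, run of $\mathcal{B}_I$'s, and suffix is exactly the paper's argument: its proof of this lemma is a one-liner citing Lemma \ref{lemma:c0u} (the prefix keeps the iterate in $\mathcal{C}_0^U$), relation \eqref{eq:epsilon_L} together with $\epsilon(L)<\delta_2$ (the run lands the iterate in $\mathcal{C}_{\delta_2}$), and Lemma \ref{lemma:c_delta2} (the suffix keeps it there). Your treatment of the run and the suffix matches the paper's, including the correct observation that $\delta_2$ must satisfy \eqref{eq:delta2_cond} for Lemma \ref{lemma:c_delta2} to apply.

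The problem is the step you yourself flag as the remaining work. The claim you propose to prove there --- that the gap $\max_{s}\big(\max_{\bar{a}}q^\dagger(s,\bar{a})-\max_{a^k}q^{k}(s,a^k)\big)$ is non-increasing under $\mathcal{B}_E$ on $\mathcal{C}_0^U$ --- is false. Take a single self-looping state with two agents and actions $\{\alpha,\beta\}$, rewards $r(\alpha,\alpha)=0$, $r(\alpha,\beta)=r(\beta,\alpha)=-1$, $r(\beta,\beta)=1$; then $\max_{\bar{a}}q^\dagger=(1-\gamma)^{-1}$ and $\max_{a^{2}}q^\dagger(\alpha,a^{2})=\gamma(1-\gamma)^{-1}$. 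The functions $q^{k}(\alpha)=\gamma(1-\gamma)^{-1}$, $q^{k}(\beta)=\gamma(1-\gamma)^{-1}-\epsilon$ lie in $\mathcal{C}_0^U$ and have gap $1$; both agents' greedy action is $\alpha$, so one application of $\mathcal{B}_E$ gives $\max_{a^k}\mathcal{B}_Eq^{k}=\gamma^2(1-\gamma)^{-1}$ and the gap grows to $1+\gamma$. Consequently, if the run of $L$ consecutive $\mathcal{B}_I$'s occurs only after some $\mathcal{B}_E$'s, the $L$ of \eqref{eq:L}, computed from the \emph{initial} functions, need not make $\epsilon(L)<\delta_2$ at the point where the run begins, and the argument does not close as written. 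A repair is to define $L$ through a bound on the gap that is uniform over the whole trajectory in $\mathcal{C}_0^U$ (such a bound exists, since every iterate stays below $\max_{a^{-k}}q^\dagger$ while $\min_s\max_{a^k}q^k$ stays above $\min\{r_{\min}(1-\gamma)^{-1},\min_s\max_{a^k}q^k_0\}$), at the cost of a larger $L$ than \eqref{eq:L}. To be fair, the paper's own proof is silent on exactly this point, so you have located a genuine weakness in the published argument rather than missed something it resolves; but as a proof of the stated lemma your proposal, like the paper's, is incomplete at this step, and the specific route you suggest for closing it cannot succeed.
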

\begin{proof}
	The statement is an immediate consequence of lemmas \ref{lemma:c0u} and \ref{lemma:c_delta2} and relation \eqref{eq:epsilon_L}. Relation \eqref{eq:L} follows from combining equation \eqref{eq:epsilon_L} and $\epsilon(L)<\delta_2$.
\end{proof}
Notice that the probability of applying operator $\mathcal{B}_I$ at least $L$ consecutive times when operator $\mathcal{B}_p$ is applied $N\geq L$ times, is the same as the probability of obtaining at least $L$ consecutive heads when a biased coin (with probability of head $1-p$) is tossed $N$ times. This problem has been extensively studied and the result is available in the literature. We state the following useful result from \cite{uspensky}:
\begin{lemma}\label{lemma:uspensky}
	\cite{uspensky}: If a biased coin (with probability of head being $1-p$) is tossed $N\geq L$ times, the probability of having a sequence of at least $L$ consecutive heads is given by:
	\begin{align}
		\mathbb{P}(L)&=1-\beta_{N,L}+(1-p)^L\beta_{N-L,L}\\
		\beta_{N,L}&=\sum_{j=0}^{\floor{N/(L+1)}}(-1)^j{{N-jL}\choose{j}}\big(p(1-p)^L\big)^j
	\end{align}
	Furthermore, if $p>0.5$ the probability can be lower bounded as follows:
	\begin{align}\label{eq:uspensky_prob}
	\mathbb{P}(L)&\geq 1-\frac{1-(1-p)\xi_1}{p\xi_1(1+L-L\xi_1)}\xi_1^{-N}-\frac{L}{p}(1-p)^{N+2}
	\end{align}
	where $1<\xi_1<1+L^{-1}$.
\end{lemma}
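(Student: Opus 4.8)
Since this lemma is quoted verbatim from \cite{uspensky} and is a classical fact about success runs, the plan is to reconstruct the standard generating-function argument rather than derive it afresh. Write $q=1-p$ for the probability of a head, and let $f_N$ denote the complementary probability that \emph{no} run of $L$ consecutive heads occurs in $N$ tosses, so that $\mathbb{P}(L)=1-f_N$. First I would set up a renewal decomposition: any head-run-free sequence splits uniquely into maximal blocks of the form $H^iT$ with $0\leq i\leq L-1$, optionally closed by a trailing incomplete block $H^i$ with $0\leq i\leq L-1$. A single block $H^iT$ carries probability $q^ip$ and length $i+1$, so its length generating function is $px\sum_{i=0}^{L-1}(qx)^i=px\tfrac{1-(qx)^L}{1-qx}$; a sequence of such blocks contributes $\big(1-px\tfrac{1-(qx)^L}{1-qx}\big)^{-1}$, and appending the trailing block multiplies by $\tfrac{1-(qx)^L}{1-qx}$.

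Carrying out the algebra, using $p+q=1$, collapses these factors to the rational generating function
\begin{align}
F(x)=\sum_{N\geq0}f_Nx^N=\frac{1-q^Lx^L}{1-x+pq^Lx^{L+1}}.\label{eq:gf_plan}
\end{align}
The exact formula then follows by extracting coefficients: expanding $\big(1-x+pq^Lx^{L+1}\big)^{-1}=\sum_{k\geq0}x^k(1-pq^Lx^L)^k$ and collecting the coefficient of $x^N$ gives precisely $\beta_{N,L}$, where the truncation $j\leq\floor{N/(L+1)}$ is forced by the requirement $N-jL\geq j$ for $\binom{N-jL}{j}$ to be nonzero. Hence $[x^N]F(x)=\beta_{N,L}-q^L\beta_{N-L,L}$, and therefore $\mathbb{P}(L)=1-\beta_{N,L}+q^L\beta_{N-L,L}$, which matches the first display of the statement.

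For the lower bound when $p>0.5$ I would pass to singularity analysis of \eqref{eq:gf_plan} via the partial-fraction identity $f_N=-\sum_{\rho}P(\rho)\big/\big(D'(\rho)\rho^{N+1}\big)$, with $P(x)=1-q^Lx^L$ and $D(x)=1-x+pq^Lx^{L+1}$. The root $x=1/q$ of $D$ satisfies $P(1/q)=0$, so it contributes nothing (equivalently, the factor $1-qx$ cancels), leaving $L$ effective roots. Since $D(1)=pq^L>0$ while $D$ is decreasing just past $x=1$, there is a smallest real root $\xi_1\in(1,1+L^{-1})$. Using the defining relation $pq^L\xi_1^{L+1}=\xi_1-1$, I would simplify $D'(\xi_1)=-(1+L-L\xi_1)/\xi_1$ and $1-q^L\xi_1^{L}=(1-q\xi_1)/(p\xi_1)$; combining these two identities turns the $\xi_1$-term into exactly $\tfrac{1-(1-p)\xi_1}{p\xi_1(1+L-L\xi_1)}\xi_1^{-N}$, which is the leading term in \eqref{eq:uspensky_prob}. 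The remaining $L-1$ roots cluster near modulus $1/q$ (the large roots of $D$ satisfy $|x|\approx(pq^L)^{-1/L}$, close to $1/q$ for $p>0.5$), so their pooled contribution is bounded in absolute value by a term of order $q^{N}$, yielding the residual $\tfrac{L}{p}(1-p)^{N+2}$.

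The main obstacle is this final step: rigorously localizing the $L-1$ subdominant roots and bounding them uniformly in $L$, i.e. bounding their moduli \emph{from below} to secure the $(1-p)^{N}$ decay and their residues \emph{from above} to secure the $L/p$ prefactor, while handling the fact that they may be complex and produce oscillating (hence sign-indefinite) contributions. This is precisely the delicate estimate established in \cite{uspensky}, so I would invoke that reference for the explicit constants rather than reproduce the root-counting and residue bounds in full.
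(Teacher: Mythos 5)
The paper does not actually prove this lemma: it is imported verbatim from the cited reference, with both the exact run-probability formula and the $p>0.5$ lower bound taken on faith. Your reconstruction therefore goes strictly further than the paper does, and the parts you carry out are correct. The renewal decomposition into blocks $H^iT$ with a trailing incomplete block gives exactly the rational generating function $F(x)=(1-q^Lx^L)/(1-x+pq^Lx^{L+1})$ with $q=1-p$ (the algebra $1-qx-px(1-(qx)^L)=1-x+pq^Lx^{L+1}$ checks out), the coefficient extraction correctly produces $\beta_{N,L}$ with the truncation $j\leq\floor{N/(L+1)}$ forced by $\binom{N-jL}{j}$ vanishing otherwise, and so $f_N=\beta_{N,L}-q^L\beta_{N-L,L}$ yields the stated $\mathbb{P}(L)$. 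Your identities for the dominant root also verify: with $pq^L\xi_1^{L+1}=\xi_1-1$ one gets $D'(\xi_1)=-(1+L-L\xi_1)/\xi_1$ and $1-q^L\xi_1^L=(1-q\xi_1)/(p\xi_1)$, and the residue at $\xi_1$ reproduces the leading term of \eqref{eq:uspensky_prob} exactly; the cancellation of the spurious root $x=1/q$ against the numerator is also right. The only piece you do not establish is the uniform bound $\frac{L}{p}(1-p)^{N+2}$ on the pooled contribution of the $L-1$ subdominant (possibly complex) roots, which you explicitly flag and delegate to the reference. Since the paper delegates the \emph{entire} lemma to that same reference, this is an acceptable stopping point, and your writeup is a useful expansion of what the paper leaves implicit.
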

Combining lemmas \ref{lemma:uspensky} and \ref{lemma:c_delta2_condition} we can conclude that after $N\geq L>0$ applications of operator $\mathcal{B}_p$ to any set of $K$ $q^{k}(s,a^k)\in\mathcal{C}_0^U$ functions it holds:
\begin{align}
&\mathbb{P}\big(\mathcal{B}_p^Nq^{k}(s,a^k)\in\mathcal{C}_{\delta_2}\big)\geq1-\beta_{N,L}+(1-p)^L\beta_{N-L,L}\label{eq:prob}\\
&L=\ceil[\Bigg]{\log_\gamma\left(\frac{\delta_2}{\max_{s}\big|\max_{a^k}q^{k}(s,a^k)-\max_{\bar{a}}q^\dagger(s,\bar{a})\big|}\right)}\\
&\mathcal{C}_{\delta_2}=\big\{q^{k}|q^{k,\star}(s,a^k)-\delta_2\leq q^{k}(s,a^k)\leq\max_{a^{-k}}q^\dagger(s,a^k,a^{-k})+\delta_2\forall (k,s,a^k)\hspace{-1mm}\in\hspace{-1mm}(\mathcal{K},\mathcal{S},\mathcal{A}^{k})\big\}
\end{align}
If $p>0.5$ we can lower bound probability \eqref{eq:prob} by:
\begin{align}
&\mathbb{P}\big(\mathcal{B}_p^Nq^{k}(s,a^k)\in\mathcal{C}_{\delta_2}\big)\geq 1-\frac{1-(1-p)\xi_1}{p\xi_1(1+L-L\xi_1)}\xi_1^{-N}-\frac{L}{p}(1-p)^{N+2}
\end{align}

\subsection{Tabular \textit{Logical Team Q-Learning}}\label{app:tabular_version}
In the particular case where the MDP is deterministic (and hence $\Ex(\bs{r}(s,\bar{a},\bs{s'})+\gamma\max_{a^{\prime}}q^{k}(s',a^{\prime}))=r(s,\bar{a},s')+\gamma\max_{a^{\prime}}q^{k}(s',a^{\prime})$) the tabular version of \textit{Logical Team Q-learning} is given by algorithm \ref{algorithm:tabular_logical}.
\begin{algorithm}[H]
	\caption{Tabular \textit{Logical Team Q-Learning} for deterministic MDPs}
	\label{algorithm:tabular_logical}
	\begin{algorithmic}
		\STATE{\bfseries Initialize:} an empty replay buffer $\mathcal{R}$ and estimates $\widehat{q}_B^{k}$ and $\widehat{q}_U^{k}$.
		\FOR{iterations $e=0,\ldots,E$}
		\STATE Sample $T$ transitions $(s,\bar{a},r,s')$ by following some behavior policy which guarantees all joint actions are sampled with non-zero probability and store them in $\mathcal{R}$.
		\FOR{iterations $i=0,\ldots,I$}
		\STATE Sample a transition $(s,\bar{a},r,s')$ from $\mathcal{R}$.
		\FOR{agent $k=1,\cdots,K$}
		\IF{$\big(a^{n}=\argmax_{a^n}\widehat{q}^{n}(s,a^n)\hspace{1mm}\forall\hspace{-0.3mm}n\hspace{-0.5mm}\neq\hspace{-0.5mm}k\big)$}
		\STATE $\widehat{q}^{k}(s,a^k)=\widehat{q}^{k}(s,a^k)+\mu\big(r+\max\limits_{a}\widehat{q}^{k}(s',a)-\widehat{q}^{k}(s,a^k)\big)$
		\ELSIF{$\big(r+\max\limits_{a}\widehat{q}^{k}(s',a)>\widehat{q}^{k}(s,a^k)\big)$}
		\STATE $\widehat{q}^{k}(s,a^k)=\widehat{q}^{k}(s,a^k)+\mu\alpha\big(r+\max\limits_{a}\widehat{q}^{k}(s',a)-\widehat{q}^{k}(s,a^k)\big)$
		\ENDIF
		\ENDFOR
		\ENDFOR
		\ENDFOR
	\end{algorithmic}
\end{algorithm}

\begin{algorithm}[H]
	\caption{Tabular \textit{Logical Team Q-Learning}}
	\label{algorithm:stochastic_version}
	\begin{algorithmic}
		\STATE{\bfseries Initialize:} an empty replay buffer $\mathcal{R}$ and estimates $\widehat{q}_B^{k}$ and $\widehat{q}_U^{k}$.
		\FOR{iterations $e=0,\ldots,E$}
		\STATE Sample $T$ transitions $(s,\bar{a},r,s')$ by following some behavior policy and store them in $\mathcal{R}$.
		\FOR{iterations $i=0,\ldots,I$}
		\STATE Sample a transition $(s,\bar{a},r,s')$ from $\mathcal{R}$.
		\FOR{agent $k=1,\cdots,K$}
		\IF{$a^{n}=\argmax_{a^n}\widehat{q}_B^{n}(s,a^n)\hspace{1mm}\forall\hspace{-0.3mm}n\hspace{-0.5mm}\neq\hspace{-0.5mm}k$}
		\STATE $\widehat{q}_B^{k}(s,a^k)=\widehat{q}_B^{k}(s,a^k)+\mu\big(r+\max\limits_{a}\widehat{q}_U^{k}(s',a)-\widehat{q}_B^{k}(s,a^k)\big)$
		\STATE $\widehat{q}_U^{k}(s,a^k)=\widehat{q}_U^{k}(s,a^k)+\mu\big(r+\max\limits_{a}\widehat{q}_U^{k}(s',a)-\widehat{q}_U^{k}(s,a^k)\big)$ 
		\ENDIF
		\IF{$\big(r+\max\limits_{a}\widehat{q}_U^{k}(s',a)>\widehat{q}_B^{k}(s,a^k)\big)$}
		\STATE $\widehat{q}_B^{k}(s,a^k)=\widehat{q}_B^{k}(s,a^k)+\mu\alpha\big(r+\max\limits_{a}\widehat{q}_U^{k}(s',a)-\widehat{q}_B^{k}(s,a^k)\big)$
		\ENDIF
		\ENDFOR
		\ENDFOR
		\ENDFOR
	\end{algorithmic}
\end{algorithm}
If algorithm \ref{algorithm:tabular_logical} were applied to a stochastic MDP, due to condition $c_2$ $\big(r+\max_{a}\widehat{q}^{k}(s',a)>\widehat{q}^{k}(s,a^k)\big)$, it would be subject to bias, which would propagate through bootstrapping and hence could compromise its performance. This can be solved by having a second unbiased estimate $q_U$ that is updated only when $c_1$ is satisfied and use this unbiased estimate to bootstrap. The resulting algorithm is shown in algorithm \ref{algorithm:stochastic_version}.

\subsection{Matrix game additional results}\label{app:experiment1}
We start specifying the hyperparameters. For IQL, DistQ, LTQL and Qtran we used a step-size equal to $0.1$. The $\alpha$ parameter for LTQL is equal to $1$. The mixing network in Qmix has $2$ hidden layers with $5$ units each, the nonlinearity used was the \textit{ELu} and the step-size used was $0.05$ (we had to make it smaller than the others to make the SGD optimizer converge). We finally remark that due to the use of a NN in Qmix we had to train this algorithm with $100$ times more games (notice the x-axis in figure \ref{fig:convergence_curves_matrix_game}).

In figure \ref{fig:convergence_curves_matrix_game} we show the convergence curves for IQL \ref{fig:iql_matrix_game}, DistQ \ref{fig:dist_q}, LTQL \ref{fig:biased_logic_seed_0}-\ref{fig:unbiased_logic_seed_1}, Qmix \ref{fig:q_mix_matrix_game} and Qtran \ref{fig:q_tran_matrix_game}. Figures \ref{fig:biased_logic_seed_0} and \ref{fig:unbiased_logic_seed_0} correspond to the deterministic (algorithm \ref{algorithm:tabular_logical}) and general version (algorithm \ref{algorithm:stochastic_version}) of LTQL, respectively. Figures \ref{fig:biased_logic_seed_1} and \ref{fig:unbiased_logic_seed_1} also show curves for LTQL but use different seeds and show that this algorithm can converge to either of the two following set of factored $q$-functions:
\begin{align}
q^{1,\star}(a^1)&=[2,1]\hspace{4mm}q^{2,\star}(a^2)=[0,2,0]\hspace{8mm}\textbf{or}\hspace{8mm}q^{1,\star}(a^1)=[0,2]\hspace{4mm}q^{2,\star}(a^2)=[0,1,2]
\end{align} 
One interesting fact to note is that the suboptimal values of $q_B^k$ (in figures \ref{fig:biased_logic_seed_0} and \ref{fig:biased_logic_seed_1}) do not converge while the same values do converge in the case of $q_U^k$. The reason for this is that there is a parallel between including the unbiased estimate $q_U^k$ in the RL algorithm and including an application of operator $\mathcal{B}_E$ at the end of the dynamic programming procedure described in theorem 1. The proof of the theorem shows that if such operator is not included, only the optimal values of the estimates generated by \textit{Logical Team Q-learning} converge to $q^{k,\star}$, while the values corresponding to suboptimal actions oscillate in the region between $q^{k,\star}(s,a^k)$ and $\max_{a^{-k}}q^\dagger(s,a^k,a^{-k})$ (which is what happens when the unbiased estimate $q_U^k$ is not used in algorithm \ref{algorithm:tabular_logical}). Note that in the case of algorithm \ref{algorithm:stochastic_version}
where the unbiased estimate $q_U^k$ is included all values $q^{k}$ converges to $q^{k,\star}$ for all actions, not just the optimal ones (this is equivalent to including the operator $\mathcal{B}_E$ after $\mathcal{B}_p^N$ in the dynamic programming setting).
\begin{figure}[h]
	\begin{minipage}{.32\linewidth}
		\setlength{\extrarowheight}{2pt}
		\centering
		\begin{subfigure}{\textwidth}
			\includegraphics[width=\textwidth]{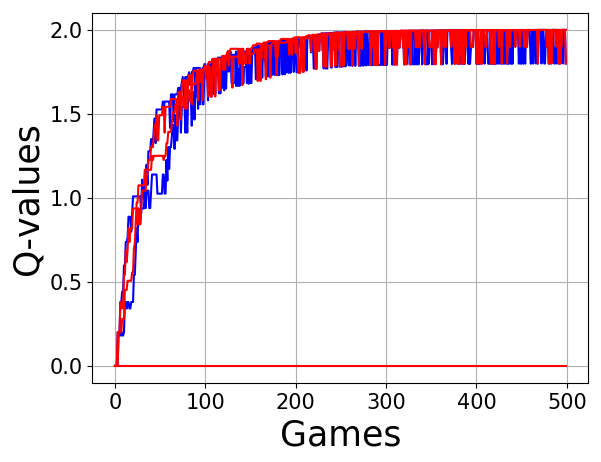}
			\subcaption{IQL}
			\label{fig:iql_matrix_game}
		\end{subfigure}
	\end{minipage}
	\begin{minipage}{.32\linewidth}
		\setlength{\extrarowheight}{2pt}
		\centering
		\begin{subfigure}{\textwidth}
			\includegraphics[width=\textwidth]{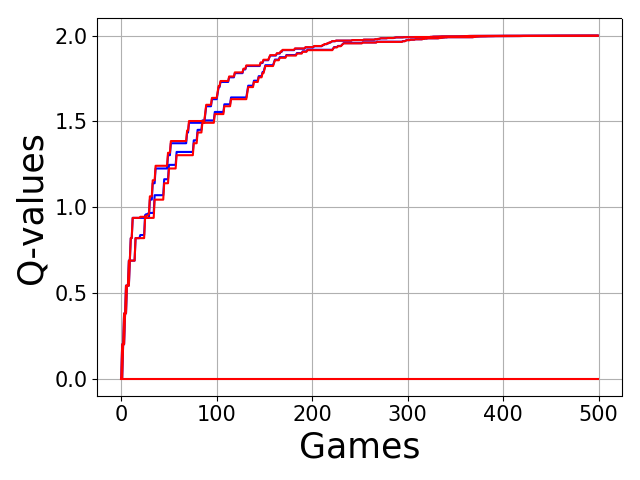}
			\subcaption{DistQ}
			\label{fig:dist_q}
		\end{subfigure}
	\end{minipage}
	\begin{minipage}{.32\linewidth}
		\setlength{\extrarowheight}{2pt}
		\centering
		\begin{subfigure}{\textwidth}
			\includegraphics[width=\textwidth]{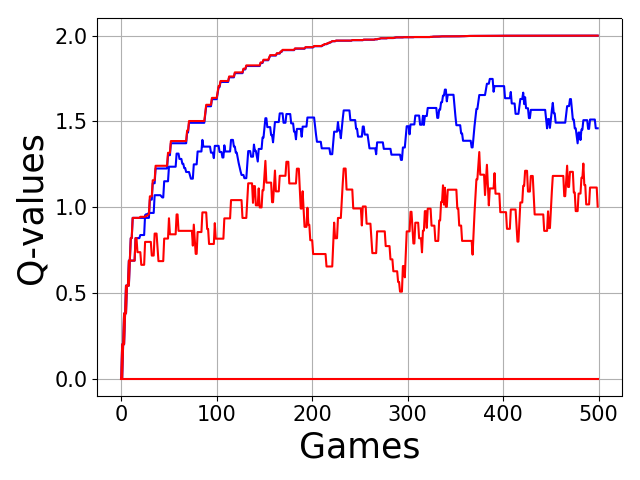}
			\subcaption{$q_B$ (seed=0)}
			\label{fig:biased_logic_seed_0}
		\end{subfigure}
	\end{minipage}
	\begin{minipage}{.32\linewidth}
		\setlength{\extrarowheight}{2pt}
		\centering
		\begin{subfigure}{\textwidth}
			\includegraphics[width=\textwidth]{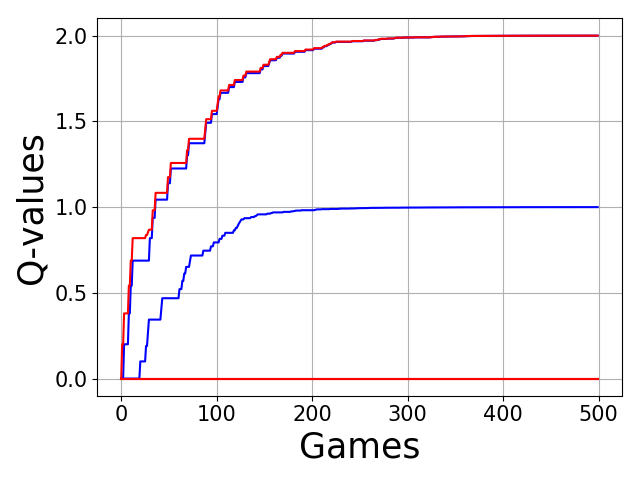}
			\subcaption{$q_U$ (seed=0)}
			\label{fig:unbiased_logic_seed_0}
		\end{subfigure}
	\end{minipage}
	\hspace{3mm}
	\begin{minipage}{.33\linewidth}
		\vspace{-2mm}
		\setlength{\extrarowheight}{2pt}
		\centering
		\begin{subfigure}{\textwidth}
			\includegraphics[width=\textwidth]{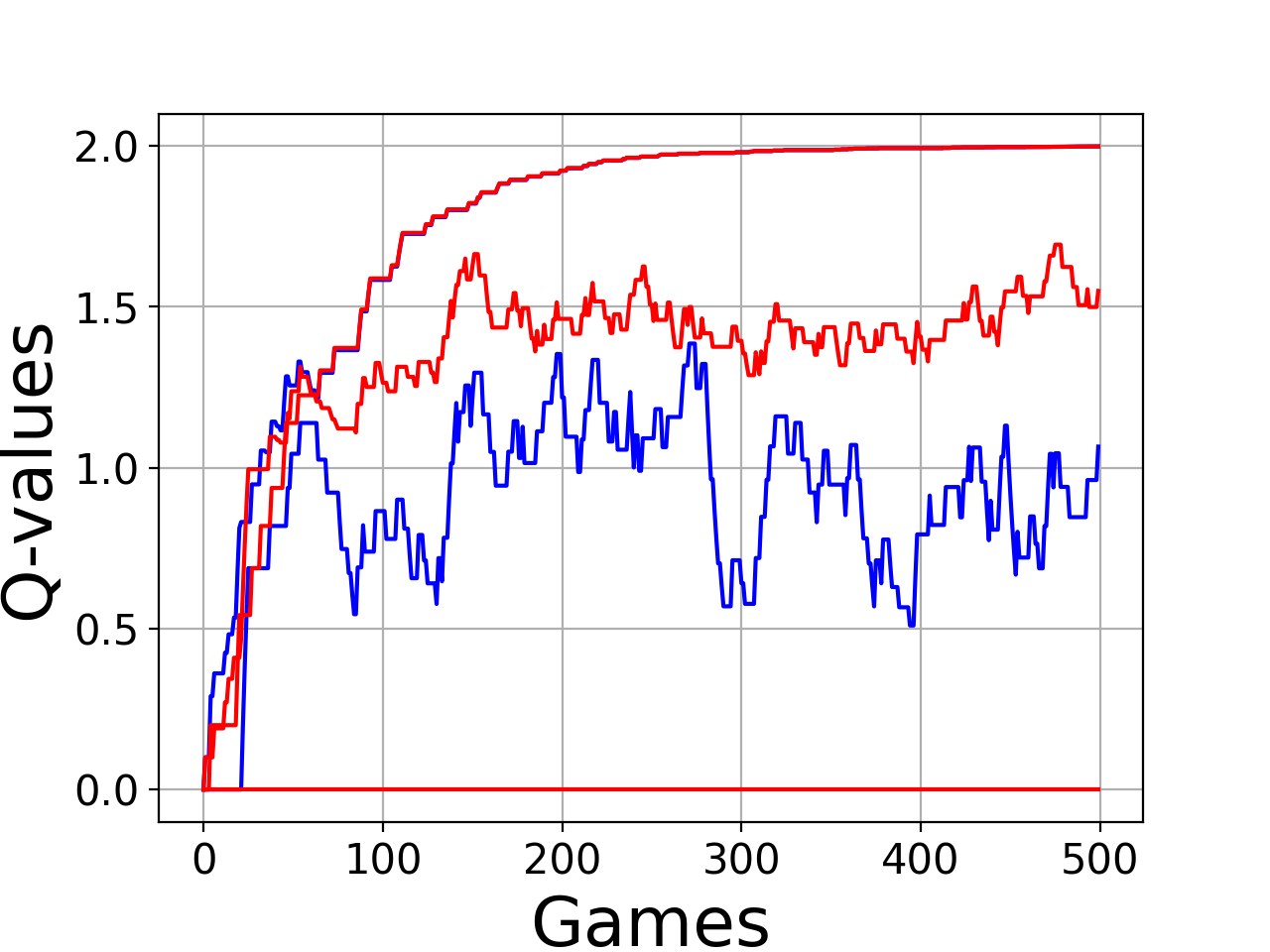}
			\subcaption{$q_B$ (seed=1)}
			\label{fig:biased_logic_seed_1}
		\end{subfigure}
	\end{minipage}
	\hspace{2mm}
	\begin{minipage}{.33\linewidth}
		\vspace{-2mm}
		\setlength{\extrarowheight}{2pt}
		\centering
		\begin{subfigure}{\textwidth}
			\includegraphics[width=\textwidth]{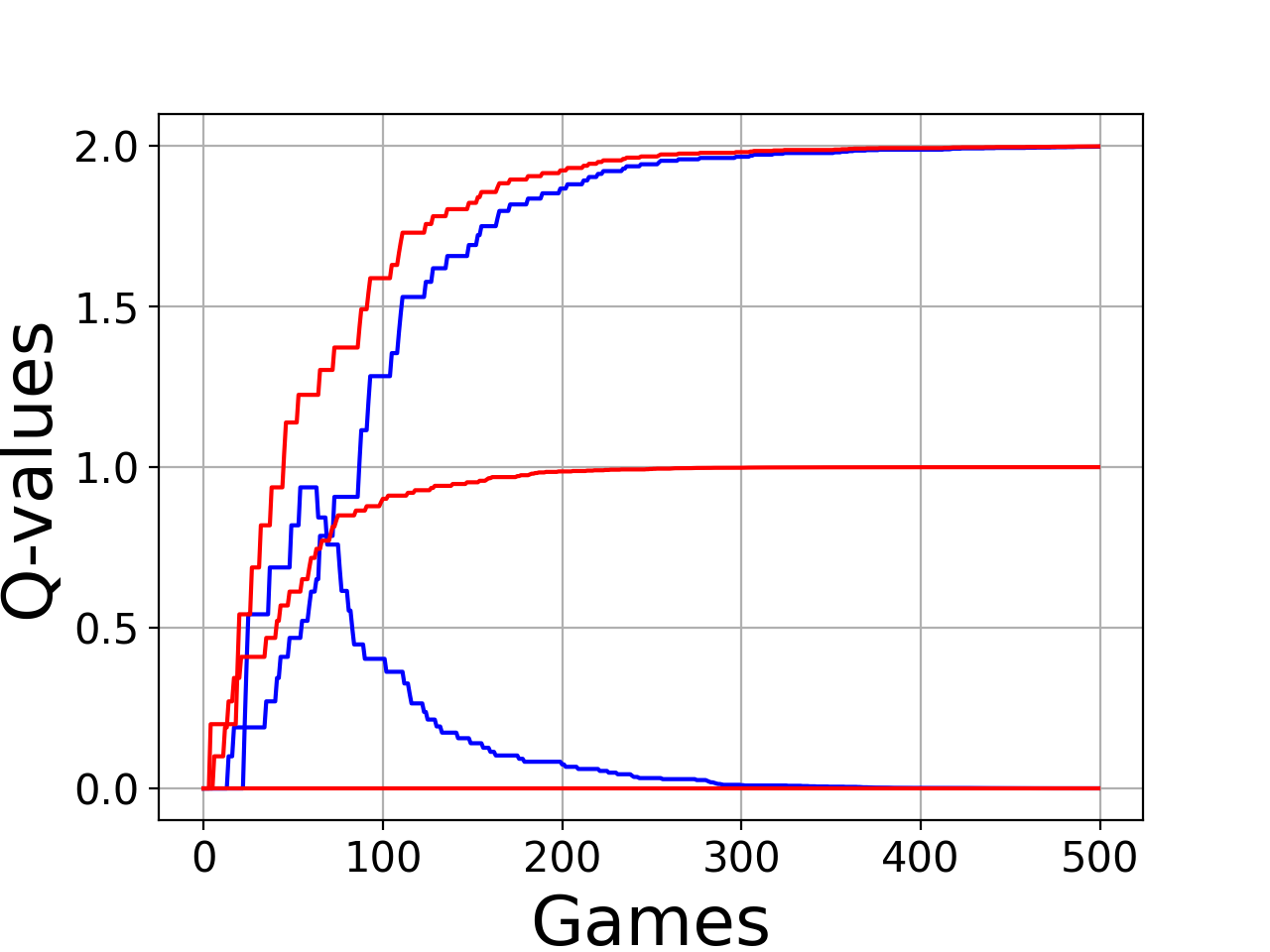}
			\subcaption{$q_U$ (seed=1)}
			\label{fig:unbiased_logic_seed_1}
		\end{subfigure}
	\end{minipage}
	\begin{minipage}{.32\linewidth}
		\setlength{\extrarowheight}{2pt}
		\centering
		\begin{subfigure}{\textwidth}
			\includegraphics[width=\textwidth]{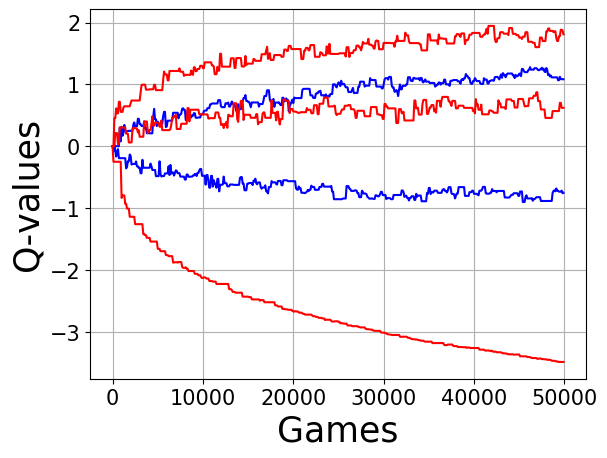}
			\subcaption{Qmix}
			\label{fig:q_mix_matrix_game}
		\end{subfigure}
	\end{minipage}
	\hspace{1mm}
	\begin{minipage}{.32\linewidth}
		\setlength{\extrarowheight}{2pt}
		\centering
		\begin{subfigure}{\textwidth}
			\includegraphics[width=\textwidth]{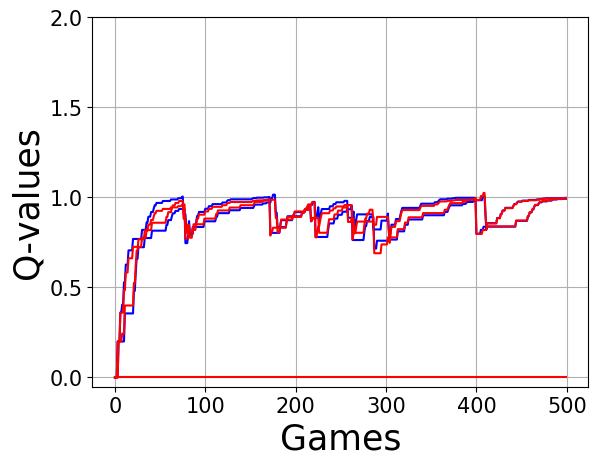}
			\subcaption{Qtran}
			\label{fig:q_tran_matrix_game}
		\end{subfigure}
	\end{minipage}
	\caption{Matrix game. In all figures the red curves correspond to the three actions of agent 2, while the two blue curves correspond to the two actions from agent 1.}
	\label{fig:convergence_curves_matrix_game}
\end{figure}
Below we show the joint $q$ values generated by Qmix's mixing network.
\begin{table}[H]
	\centering
	\begin{tabular}{*{5}{c|}}
		\multicolumn{1}{c}{} & \multicolumn{1}{c}{} &\multicolumn{3}{c}{Agent $2$}\\\cline{3-5}
		\multicolumn{1}{c}{\multirow{3}*{\begin{turn}{-90}Agent 1\end{turn}}}  &  & $a_1$ $(-3.49)$  & $a_2$ $(1.83)$ & $a_3$ $(0.62)$ \\\cline{2-5}
		& $b_1$ $(-0.74)$ & $-4.78\times10^{-2}$ & $1.17$ & $6.86\times10^{-1}$ \\\cline{2-5}
		& $b_2$ $(1.09)$ & $1.51\times10^{-3}$ & $1.57$ & $1.09$\\\cline{2-5}
	\end{tabular}
	\vspace{3mm}
	\caption{Qmix full results}
\end{table}

Code is available at https://github.com/lcassano/Logical-Team-Q-Learning-paper.

\subsection{Stochastic TMDP additional results}\label{app:experiment2}
In this environment agents rely on the following observations. The observation corresponding to agent 1 is a vector with two binary elements: the first one indicates whether or not agent 2 is in the leftmost position, and the second element indicates whether or not there is enough time for agent 2 to reach the leftmost position. The observation corresponding to agent 2 is a vector with two elements: the first one is the number of the position it occupies and the second one is the same as agent 1 (whether there is enough time to reach the leftmost position).

All algorithms are implemented in an on-line manner with no replay buffer. $\epsilon$-greedy exploration with a decaying schedule is used in all cases ($\epsilon=\max[0.05, 1-\textit{epoch}/2\times10^5]$). The step-size used is $\mu=0.025$ and the smaller step-size for HystQ is $\mu_{\textrm{small}}=10^{-2}$, in the case of Qmix we used $\mu=10^{-3}$ to guarantee stability. The $\alpha$ parameter for LTQL is equal to $1$.

Figures \ref{fig:logic_q_values} show the estimated $q$-values for LTQL corresponding to 4 different observations at the 4 positions. Note that in all figures the optimum action has the highest value and correctly estimates the return corresponding to the optimal team policy ($+10$).

\begin{figure}[h]
	\centering
	\begin{minipage}{.35\linewidth}
		\centering
		\begin{subfigure}{\textwidth}
			\includegraphics[width=\textwidth]{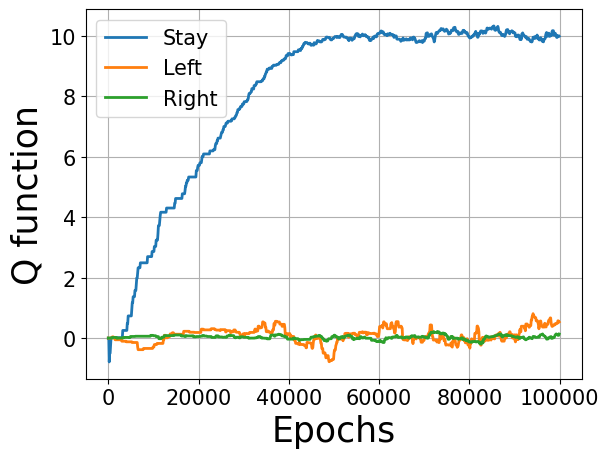}
			\subcaption{Leftmost position and $t=3$}
			\label{fig:logic_0}
		\end{subfigure}
	\end{minipage}
	\hspace{20mm}
	\begin{minipage}{.35\linewidth}
		\centering
		\begin{subfigure}{\textwidth}
			\includegraphics[width=\textwidth]{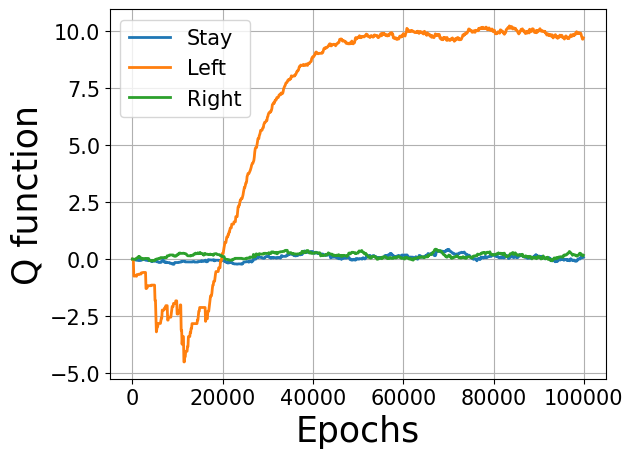}
			\subcaption{Slot adjacent to leftmost and $t=2$}
			\label{fig:logic_1}
		\end{subfigure}
	\end{minipage}
	\begin{minipage}{.35\linewidth}
		\centering
		\begin{subfigure}{\textwidth}
			\includegraphics[width=\textwidth]{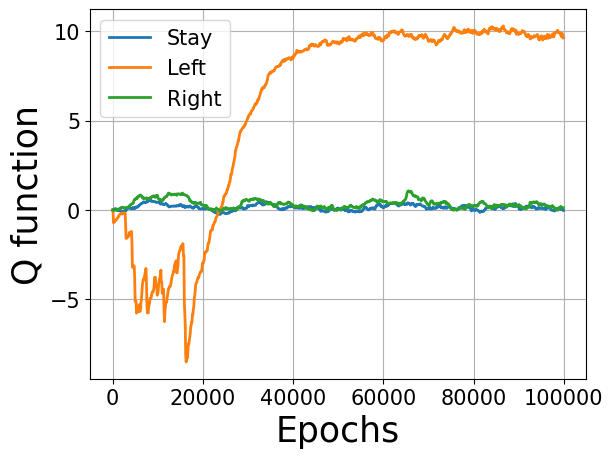}
			\subcaption{Slot adjacent to rightmost and $t=1$}
			\label{fig:logic_2}
		\end{subfigure}
	\end{minipage}
	\hspace{20mm}
	\begin{minipage}{.35\linewidth}
		\centering
		\begin{subfigure}{\textwidth}
			\includegraphics[width=\textwidth]{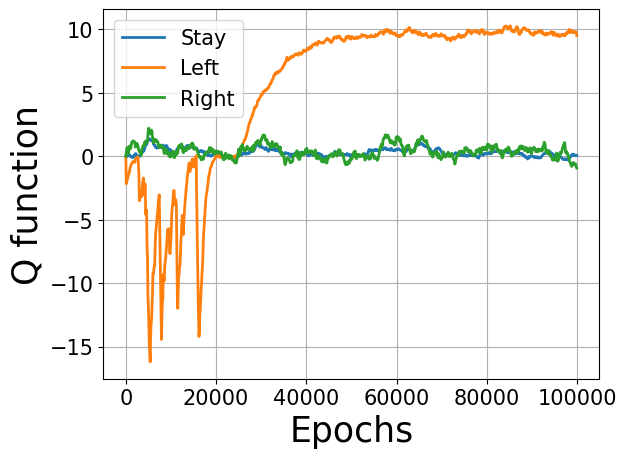}
			\subcaption{Rightmost position and $t=0$}
			\label{fig:logic_3}
		\end{subfigure}
	\end{minipage}
	\caption{Learning curves for agent 2 of \textit{Logical Team Q-learning} for a random seed.}
	\label{fig:logic_q_values}
\end{figure}

Figures \ref{fig:iql_values} show the learning curves for IQL. Note that IQL fails at this environment because it has no mechanism to discard the $-30$ penalty incurred due to moving to the left when agent 1 presses the button due to exploration.

\begin{figure}[h]
	\centering
	\begin{minipage}{.35\linewidth}
		\centering
		\begin{subfigure}{\textwidth}
			\includegraphics[width=\textwidth]{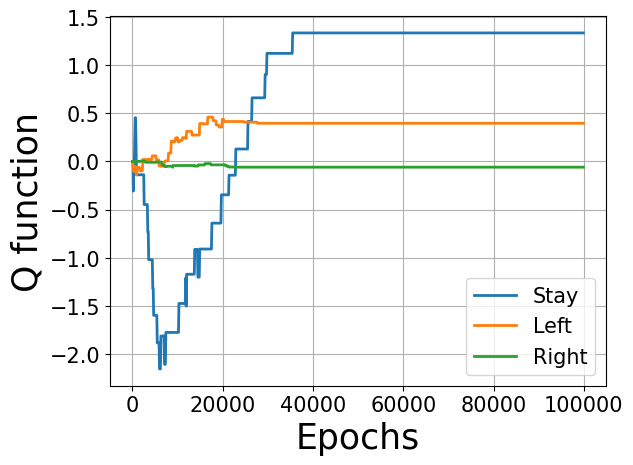}
			\subcaption{Leftmost position and $t=3$}
			\label{fig:iql_0}
		\end{subfigure}
	\end{minipage}
	\hspace{20mm}
	\begin{minipage}{.35\linewidth}
		\centering
		\begin{subfigure}{\textwidth}
			\includegraphics[width=\textwidth]{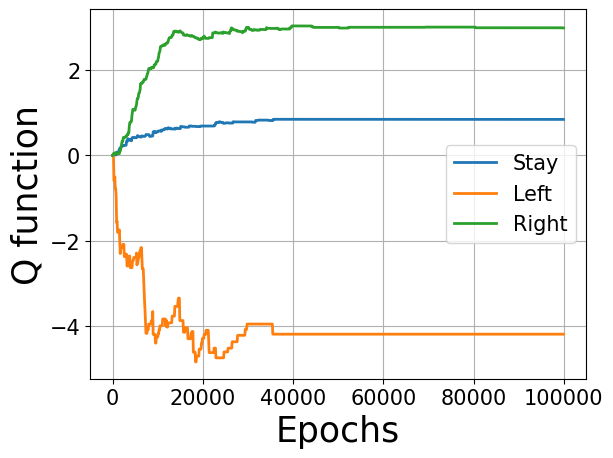}
			\subcaption{Slot adjacent to leftmost and $t=2$}
			\label{fig:iql_1}
		\end{subfigure}
	\end{minipage}
	\begin{minipage}{.35\linewidth}
		\centering
		\begin{subfigure}{\textwidth}
			\includegraphics[width=\textwidth]{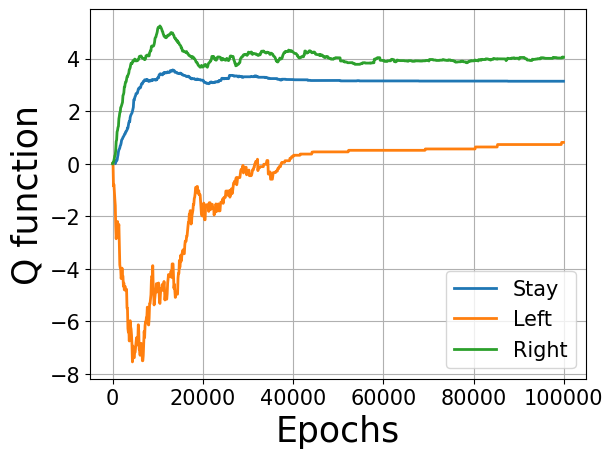}
			\subcaption{Slot adjacent to rightmost and $t=1$}
			\label{fig:iql_2}
		\end{subfigure}
	\end{minipage}
	\hspace{20mm}
	\begin{minipage}{.35\linewidth}
		\centering
		\begin{subfigure}{\textwidth}
			\includegraphics[width=\textwidth]{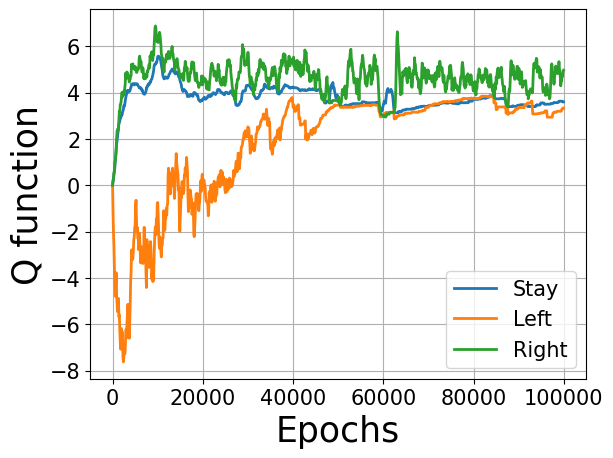}
			\subcaption{Rightmost position and $t=0$}
			\label{fig:iql_3}
		\end{subfigure}
	\end{minipage}
	\caption{Learning curves for agent 2 of IQL  for a random seed.}
	\label{fig:iql_values}
\end{figure}

Figures \ref{fig:dist_q_values} show the learning curves for DistQ. The reason that this algorithm cannot solve this environment is that it severely overestimates the value of choosing to move to the right whilst on the rightmost position. It is well known that this is a consequence of the fact that DistQ only performs updates that increase the estimates of the $Q$-values combined with the stochastic reward received when agent 2 ``stumbles" against the right edge.

\begin{figure}[h]
	\centering
	\begin{minipage}{.35\linewidth}
		\centering
		\begin{subfigure}{\textwidth}
			\includegraphics[width=\textwidth]{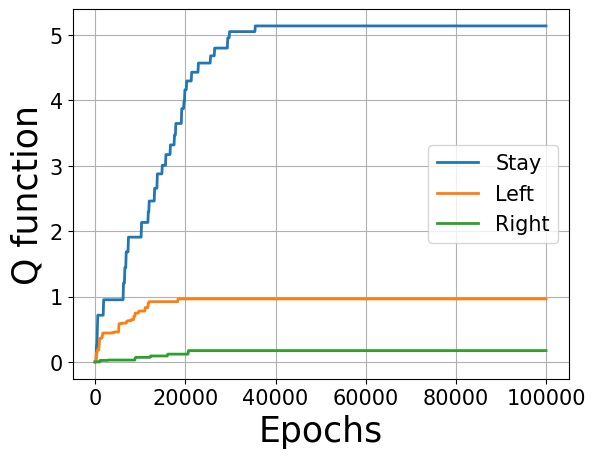}
			\subcaption{Leftmost position and $t=3$}
			\label{fig:dist_0}
		\end{subfigure}
	\end{minipage}
	\hspace{20mm}
	\begin{minipage}{.35\linewidth}
		\centering
		\begin{subfigure}{\textwidth}
			\includegraphics[width=\textwidth]{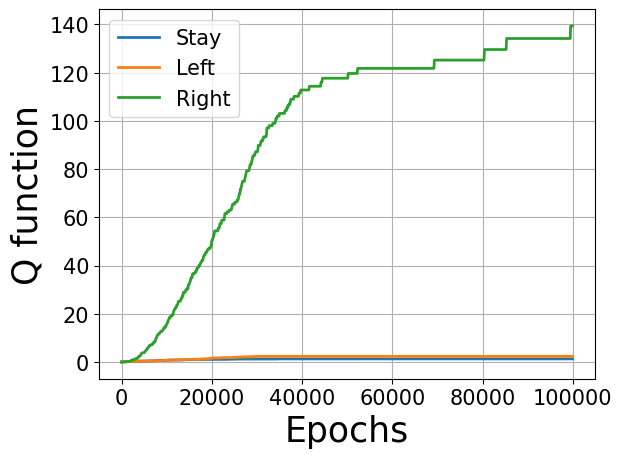}
			\subcaption{Slot adjacent to leftmost and $t=2$}
			\label{fig:dist_1}
		\end{subfigure}
	\end{minipage}
	\begin{minipage}{.35\linewidth}
		\centering
		\begin{subfigure}{\textwidth}
			\includegraphics[width=\textwidth]{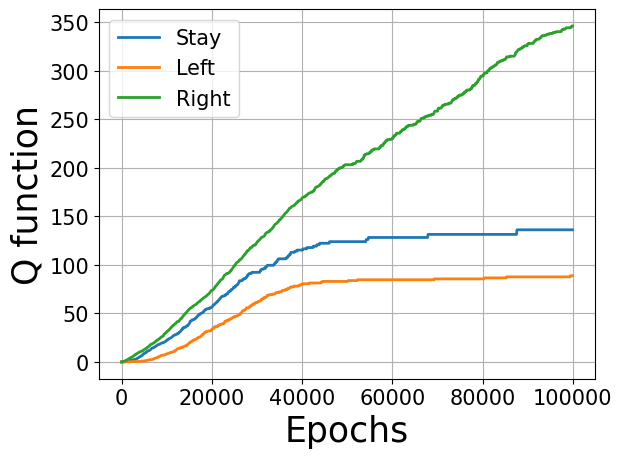}
			\subcaption{Slot adjacent to rightmost and $t=1$}
			\label{fig:dist_2}
		\end{subfigure}
	\end{minipage}
	\hspace{20mm}
	\begin{minipage}{.35\linewidth}
		\centering
		\begin{subfigure}{\textwidth}
			\includegraphics[width=\textwidth]{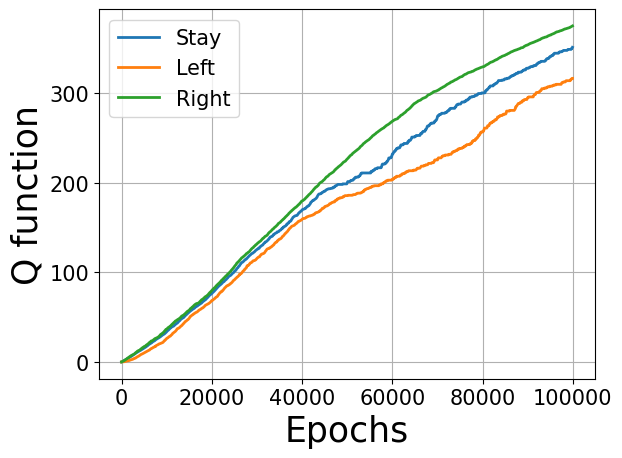}
			\subcaption{Rightmost position and $t=0$}
			\label{fig:dist_3}
		\end{subfigure}
	\end{minipage}
	\caption{Learning curves for agent 2 of DistQ  for a random seed.}
	\label{fig:dist_q_values}
\end{figure}

Figures \ref{fig:hyst_q_values} show the learning curves for HystQ. This algorithm cannot solve this environment because it has two issues and the way to solve one makes the other worse. More specifically, one can be solved by increasing the smaller step-size, while the other needs to decrease it. The first issue is the same one that affects DistQ, i.e., the overestimation of the \textit{move right} action in the rightmost position. Note that this can be ameliorated by increasing the small step-size. The second issue is the penalty incurred due to moving to the left when agent 1 presses the button. This can be ameliorated by decreasing the small step-size. The fact that there is no intermediate value for the small step-size to solve both issues is the reason that this algorithm cannot solve this environment.

\begin{figure}[h]
	\centering
	\begin{minipage}{.35\linewidth}
		\centering
		\begin{subfigure}{\textwidth}
			\includegraphics[width=\textwidth]{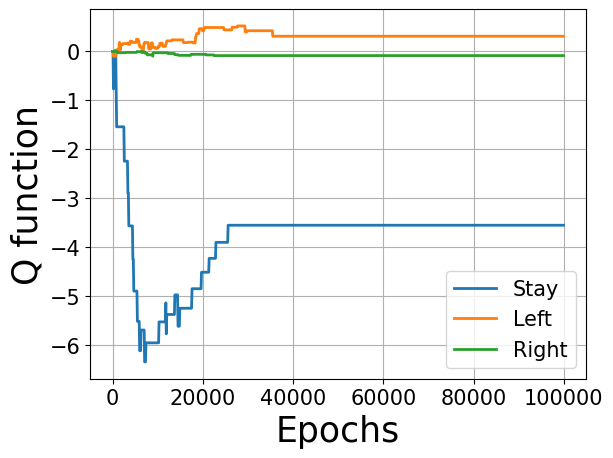}
			\subcaption{Leftmost position and $t=3$}
			\label{fig:hyst_0}
		\end{subfigure}
	\end{minipage}
	\hspace{20mm}
	\begin{minipage}{.35\linewidth}
		\centering
		\begin{subfigure}{\textwidth}
			\includegraphics[width=\textwidth]{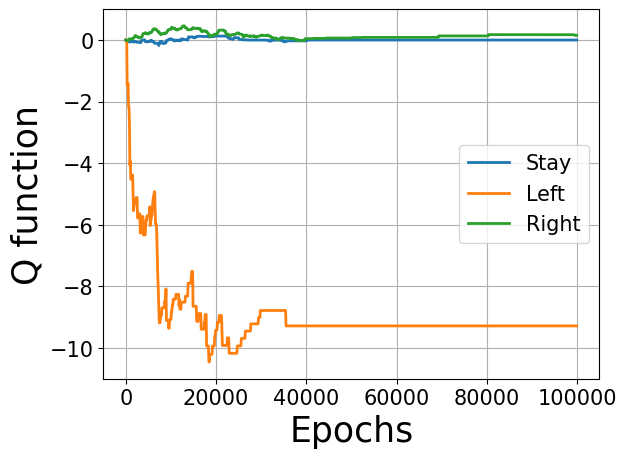}
			\subcaption{Slot adjacent to leftmost and $t=2$}
			\label{fig:hyst_1}
		\end{subfigure}
	\end{minipage}
	\begin{minipage}{.35\linewidth}
		\centering
		\begin{subfigure}{\textwidth}
			\includegraphics[width=\textwidth]{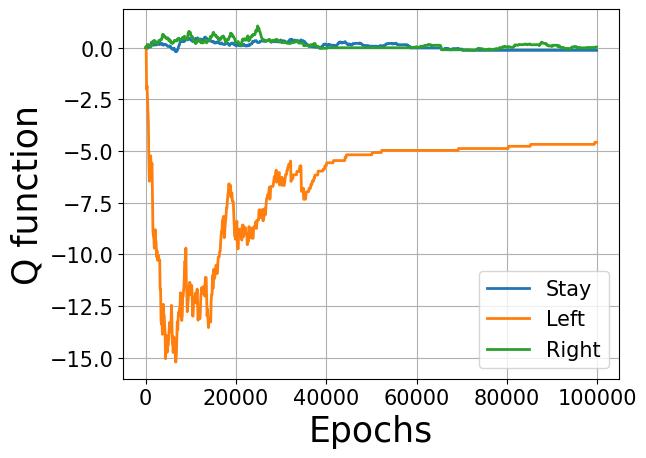}
			\subcaption{Slot adjacent to rightmost and $t=1$}
			\label{fig:hyst_2}
		\end{subfigure}
	\end{minipage}
	\hspace{20mm}
	\begin{minipage}{.35\linewidth}
		\centering
		\begin{subfigure}{\textwidth}
			\includegraphics[width=\textwidth]{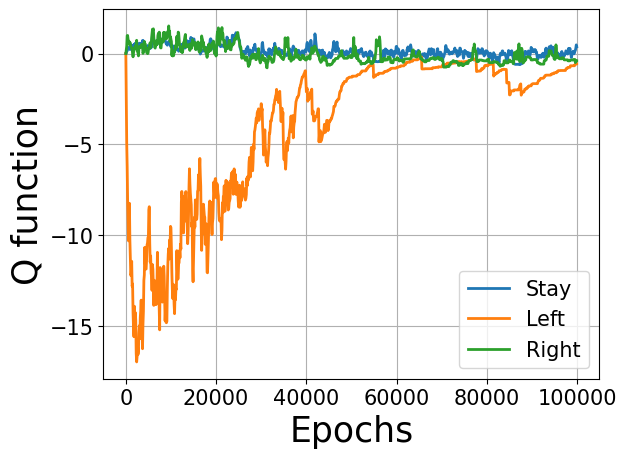}
			\subcaption{Rightmost position and $t=0$}
			\label{fig:hyst_3}
		\end{subfigure}
	\end{minipage}
	\caption{Learning curves for agent 2 of HystQ  for a random seed.}
	\label{fig:hyst_q_values}
\end{figure}

Figures \ref{fig:qmix_q_values} show the learning curves for Qmix. Qmix fails at this task due to the fact that its monotonic factoring assumption is not satisfied at this task. The architecture used is as follows: we used tabular representation for the individual $q$ functions, and for the mixing and hypernetworks we used the architecture specified in \cite{rashid2020monotonic}. More specifically, the mixing network is composed of two hidden layers (with $10$ units each) with \textit{ELu} nonlinearities in the first layer while the second layer is linear. The hypernetworks that output the weights of the mixing network consist of two layers with $ReLU$ nonlinearities followed by an activation function that takes the absolute value to ensure that the mixing network weights are non-negative. The bias of the first mixing layer is produced by a network with a unique linear layer and the other bias is produced by a two layer hypernetwork with a $ReLU$ nonlinearity. All hypernetwork layers are fully connected and have $5$ units.
\begin{figure}[h]
	\centering
	\begin{minipage}{.35\linewidth}
		\centering
		\begin{subfigure}{\textwidth}
			\includegraphics[width=\textwidth]{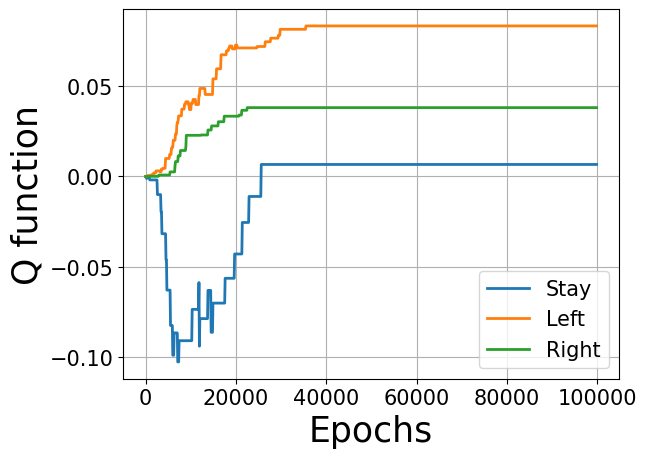}
			\subcaption{Leftmost position and $t=3$}
			\label{fig:qmix_0}
		\end{subfigure}
	\end{minipage}
	\hspace{20mm}
	\begin{minipage}{.35\linewidth}
		\centering
		\begin{subfigure}{\textwidth}
			\includegraphics[width=\textwidth]{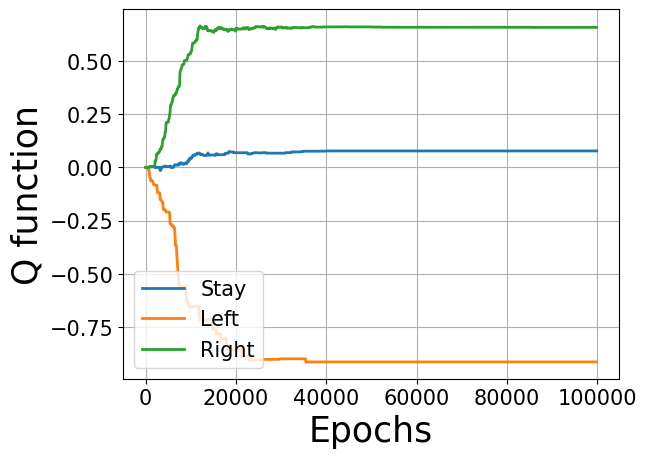}
			\subcaption{Slot adjacent to leftmost and $t=2$}
			\label{fig:qmix_1}
		\end{subfigure}
	\end{minipage}
	\begin{minipage}{.35\linewidth}
		\centering
		\begin{subfigure}{\textwidth}
			\includegraphics[width=\textwidth]{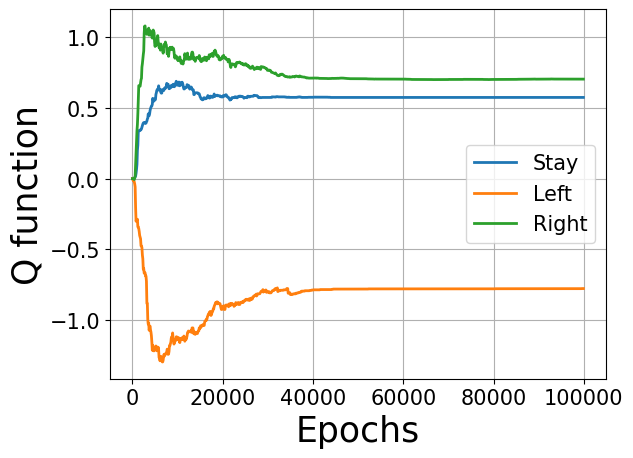}
			\subcaption{Slot adjacent to rightmost and $t=1$}
			\label{fig:qmix_2}
		\end{subfigure}
	\end{minipage}
	\hspace{20mm}
	\begin{minipage}{.35\linewidth}
		\centering
		\begin{subfigure}{\textwidth}
			\includegraphics[width=\textwidth]{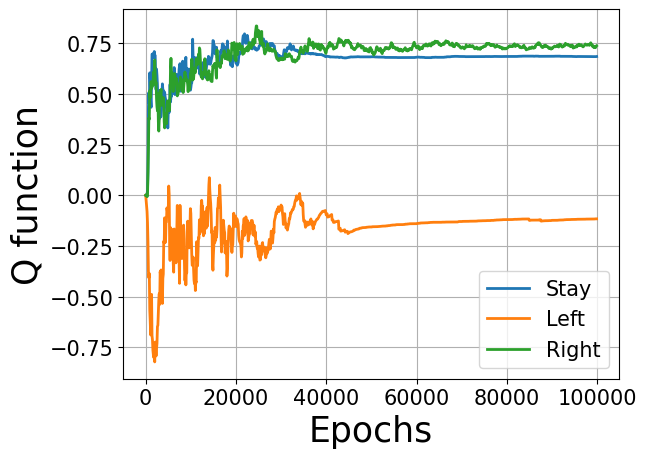}
			\subcaption{Rightmost position and $t=0$}
			\label{fig:qmix_3}
		\end{subfigure}
	\end{minipage}
	\caption{Learning curves for agent 2 of Qmix  for a random seed.}
	\label{fig:qmix_q_values}
\end{figure}

Figures \ref{fig:qtran_q_values} show the learning curves for Qtran. Qtran succeeds at this task. However, it is important to remark that this is a tabular implementation of Qtran (an algorithm designed to be used in conjuntion with NNs in complex environment), where the algorithm estimates the full joint q-function in tabular form, which is not scalable and defeats the purpose of learning factored q-functions.

\begin{figure}[H]
	\centering
	\begin{minipage}{.35\linewidth}
		\centering
		\begin{subfigure}{\textwidth}
			\includegraphics[width=\textwidth]{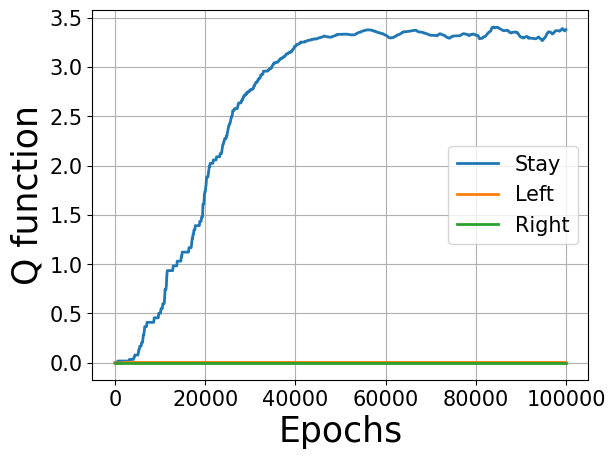}
			\subcaption{Leftmost position and $t=3$}
			\label{fig:qtran_0}
		\end{subfigure}
	\end{minipage}
	\hspace{20mm}
	\begin{minipage}{.35\linewidth}
		\centering
		\begin{subfigure}{\textwidth}
			\includegraphics[width=\textwidth]{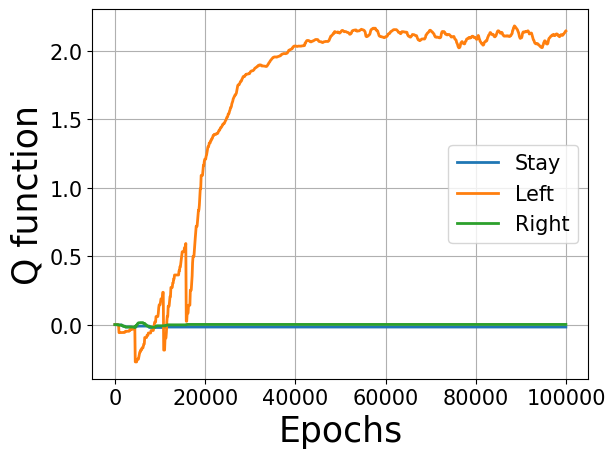}
			\subcaption{Slot adjacent to leftmost and $t=2$}
			\label{fig:qtran_1}
		\end{subfigure}
	\end{minipage}
	\begin{minipage}{.35\linewidth}
		\centering
		\begin{subfigure}{\textwidth}
			\includegraphics[width=\textwidth]{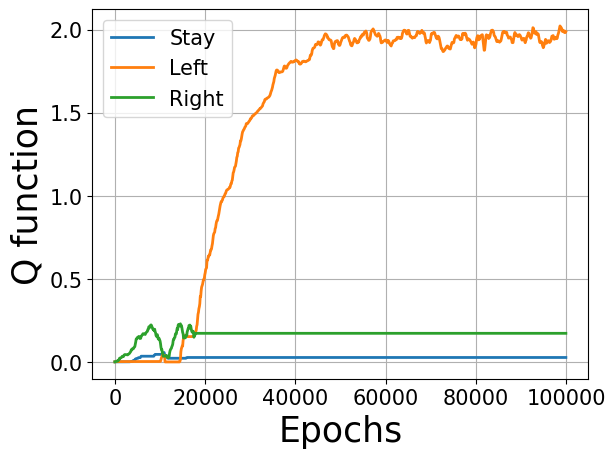}
			\subcaption{Slot adjacent to rightmost and $t=1$}
			\label{fig:qtran_2}
		\end{subfigure}
	\end{minipage}
	\hspace{20mm}
	\begin{minipage}{.35\linewidth}
		\centering
		\begin{subfigure}{\textwidth}
			\includegraphics[width=\textwidth]{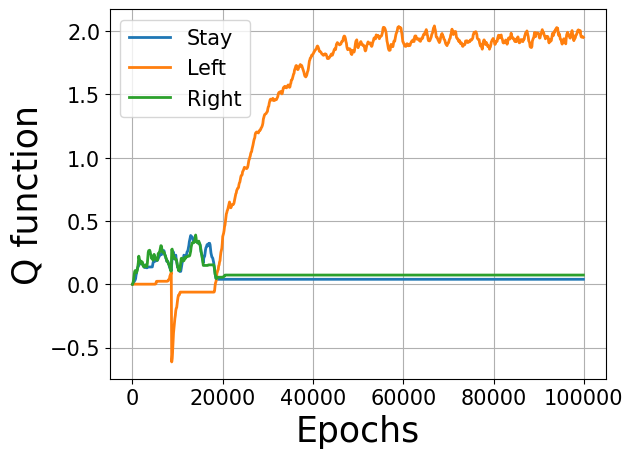}
			\subcaption{Rightmost position and $t=0$}
			\label{fig:qtran_3}
		\end{subfigure}
	\end{minipage}
	\caption{Learning curves for agent 2 of Qtran  for a random seed.}
	\label{fig:qtran_q_values}
\end{figure}

All code is available at https://github.com/lcassano/Logical-Team-Q-Learning-paper.

\subsection{Cowboy bull game additional results}\label{app:experiment3}
The bull's policy is given by the pseudocode shown in algorithm \ref{algorithm:policy_bull}.
\begin{algorithm}[H]
	\caption{Bull's policy.}
	\label{algorithm:policy_bull}
	\begin{algorithmic}
		\IF{distance to all predators $>$ 10 (this circumference is depicted by the blue line in figure \ref{fig:cowboy_game})}
		\STATE Natural foraging behavior: Stay still with 90\% probability, otherwise make a small move in a random direction.
		\ELSE
		\IF{the maximum angle formed by two predators is $>$ $108^o$}
		\STATE There's a hole to escape: Escape through the direction in between these two predators.
		\ELSIF{distance to farthest predator - distance to closest predator $>$ 5}
		\STATE There's no hole, but one predator is much closer than the others so run in the direction opposite to this predator.
		\ELSE
		\STATE No way out (scared): Stay still with 70\% probability, otherwise make a fast move in a random direction.
		\ENDIF
		\ENDIF
	\end{algorithmic}
\end{algorithm}

We now specify the hyperparameters for \textit{Logical Team Q-learning}. All NN's have two hidden layers with $50$ units and ReLu nonlinearities. However, for each $Q$-network, instead of having one network with $5$ outputs, we have $5$ networks each with $1$ output (one for each action). At every epoch the agent collects data by playing $32$ full games and then performs $50$ gradient backpropagation steps. Half of the $32$ games are played greedily and the other half use a Boltzmann policy with temperature $b_T$ that decays according to the following schedule $b_T=\max[0.05, 0.5\times(1-\textit{epoch}/15\times10^3)]$. We use this behavior policy to ensure that there are sufficient transitions that satisfy condition $c_1$ and that also there are transitions that satisfy $c_2$. The target networks are updated every $50$ backprop steps. The capacity of the replay buffer is $2.10^5$ transitions, the mini-batch size is 1024, $\alpha=1$, we use a discount factor equal to $0.99$ and optimize the networks using the Adam optimizer with initial step-size $10^{-5}$.

The hyperparameters of the HystQ implementation are the same as those of LTQL, the ratio of the two step-sizes used by HystQ is $0.1$. To run IQL we used the implementation of HystQ with the ratio of the two step-sizes set to $0$.

The architecture used by Qmix is the one suggested in \cite{rashid2020monotonic} with the exception that, for fairness, the individual $Q$-networks used the same architecture as the ones used by the other algorithms (i.e., $5$ networks with a unique output as opposed to $1$ network with $5$ outputs). All hidden layers of the hypernetworks as well the mixing network have $10$ units. In this case we did $5$ backprop iterations per epoch and the target network update period is $15$. We use a batch size of $256$, a discount factor equal to $0.98$ and optimize the networks with the Adam optimizer with initial step-size $10^{-6}$. In this case the behavior policy is always Boltzmann with the following annealing schedule for the temperature parameter $b_T=\max[0.005, 0.05\times(1-\textit{epoch}/25\times10^3)]$.

The Qtran-base variant was implemented. The individual $Q$-networks have the same architecture used by the other algorithms. The joint $Q$-network has two hidden layers with $60$ units each, the input of this network is the global state concatenated with the agents' actions with one hot encoding. The value function network has only one hidden layer with $25$ units and its input is the global state. The target networks are updated every $50$ backprop steps. The capacity of the replay buffer is $2.10^5$ transitions, the mini-batch size is 1024, we use a discount factor equal to $0.99$ and optimize the networks using the Adam optimizer with initial step-size $10^{-5}$.

The batch size, Boltzmann temperature value, learning step-size and target update period were chosen by grid search.

All implementations use TensorFlow 2. The code is available at https://github.com/lcassano/Logical-Team-Q-Learning-paper. Running one seed for one agent takes approximately 12 hours in our hardware (2017 iMac with 3.8 GHz Intel Core i5 and 16GB of RAM).
\end{document}